\newcommand{\mat}[1]{\mathbf{#1}}
\newtheorem{definition}{Definition}[section]
\newtheorem{theorem}{Theorem}[section]
\newtheorem{lemma}[theorem]{Lemma}
\title{The Clock and the Pizza: Two Stories in Mechanistic Explanation of Neural Networks}
\author{%
  Ziqian Zhong*,
  Ziming Liu*, Max Tegmark, 
  Jacob Andreas \\
  Massachusetts Institute of Technology \\
  \texttt{\{ziqianz, zmliu, tegmark, jda\}@mit.edu}
}
\begin{document}
\def\thefootnote{*}\footnotetext{Equal contribution.}\def\thefootnote{\arabic{footnote}}

\maketitle

\begin{abstract}
Do neural networks, trained on well-understood algorithmic tasks, reliably rediscover known algorithms for solving those tasks? Several recent studies, on tasks ranging from group arithmetic to in-context linear regression, have suggested that the answer is yes. Using modular addition as a prototypical problem, we show that algorithm discovery in neural networks is sometimes more complex. Small changes to model hyperparameters and initializations can induce discovery of qualitatively different algorithms from a fixed training set, and even parallel implementations of multiple such algorithms. Some networks trained to perform modular addition implement a familiar \emph{Clock} algorithm (previously described by Nanda et al.\ \cite{nanda2023progress}); others implement a previously undescribed, less intuitive, but comprehensible procedure we term the \emph{Pizza} algorithm, or a variety of even more complex procedures. Our results show that even simple learning problems can admit a surprising diversity of solutions, motivating the development of new tools for characterizing the behavior of neural networks across their algorithmic phase space.
\footnote{Code is available at \href{https://github.com/fjzzq2002/pizza}{https://github.com/fjzzq2002/pizza}.}

\end{abstract}

\section{Introduction}

Mechanistically understanding deep network models---reverse-engineering their learned algorithms and representation schemes---remains a major challenge across problem domains. Several recent studies~\cite{olah2020zoom,olsson2022context,wang2023interpretability,michaud2023quantization,nanda2023progress} have exhibited specific examples of models apparently re-discovering interpretable (and in some cases familiar) solutions to tasks like curve detection, sequence copying and modular arithmetic. Are these models the exception or the rule? Under what conditions do neural network models discover familiar algorithmic solutions to algorithmic tasks?

In this paper, we focus specifically on the problem of learning modular addition, training networks to compute sums like $8 + 6 = 2\  (\mathrm{mod}\ 12)$. Modular arithmetic can be implemented with a simple geometric solution, familiar to anyone who has learned to read a clock: every integer is represented as an angle, input angles are added together, and the resulting angle evaluated to obtain a modular sum (\autoref{fig:clock-pizza}, left). Nanda et al. \cite{nanda2023progress} show that specific neural network architectures, when trained to perform modular addition, implement this \emph{Clock} algorithm. In this work, we show that the \emph{Clock} algorithm is only one part of a more complicated picture of algorithm learning in deep networks. In particular, networks structurally similar to the ones trained by Nanda et al.\ preferentially implement a qualitatively different approach to modular arithmetic, which we term the \emph{Pizza} algorithm (\autoref{fig:clock-pizza}, right), and sometimes even more complex solutions. Models exhibit sharp \emph{algorithmic phase transitions} \cite{akyurek2022learning} between the \textit{Clock} and \textit{Pizza} algorithms as their width and attention strength very, and often implement multiple, imperfect copies of the \textit{Pizza} algorithm in parallel.

Our results highlight the complexity of mechanistic description in even models trained to perform simple tasks. They point to characterization of algorithmic phase spaces, not just single algorithmic solutions, as an important goal in algorithm-level interpretability.

\begin{table}[b]
    \centering
    \footnotesize
    \begin{tabular}{cccc}\hline
        Algorithm & Learned Embeddings &  \makecell{Gradient Symmetry} & Required Non-linearity \\\hline
        Clock & Circle & No & Multiplication\\
        Pizza & Circle & Yes & Absolute value\\
        Non-circular & Line, Lissajous-like curves, etc. & N/A & N/A \\\hline
    \end{tabular}
    \vspace{0.5em}
    \caption{Different neural algorithms for modular addition}
    \label{tab:algorithms}
\end{table}

{\bf Organization} In Section \ref{sec:circ-alg}, we review the \emph{Clock} algorithm \cite{nanda2023progress} and show empirical evidence of deviation from it in models trained to perform modular addition. In Section \ref{sec:pizzaintro}, we show that these deviations can be explained by an alternative \emph{Pizza} algorithm. In Section \ref{sec:phase-transition}, we define additional metrics to distinguish between these algorithms, and detect phase transitions between these algorithms (and others \emph{Non-circular} algorithms) when architectures and hyperparameters are varied. We discuss the relationship between these findings and other work on model interpretation in Section~\ref{sec:related-works}, and conclude in Section~\ref{sec:conclusions}.

\begin{figure}[t]
    \centering
    \includegraphics[width=\textwidth,clip,trim=0 0in 0 0]{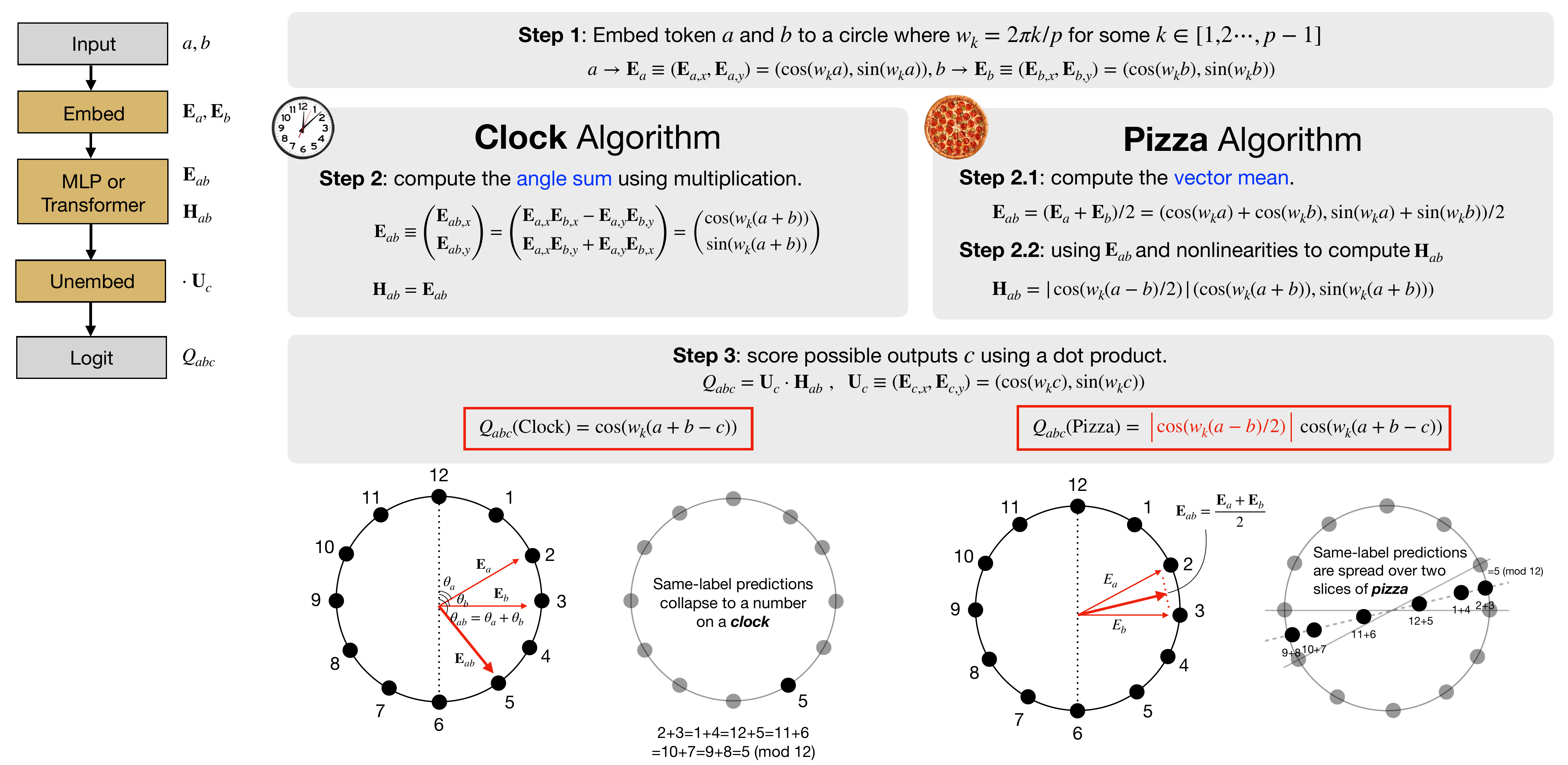}
    \caption{Illustration of the \textit{Clock} and the \textit{Pizza} Algorithm.}
    \label{fig:clock-pizza}
\end{figure}

\section{Modular Arithmetic and the \emph{Clock} Algorithm}\label{sec:circ-alg}

{\bf Setup} We train neural networks to perform modular addition $a+b=c\ ({\rm mod}\ p)$, where $a,b,c=0,1,\cdots,p-1$. We use $p=59$ throughout the paper. In these networks, every integer $t$ has an associated embedding vector $\mat{E}_t\in\mathbb{R}^d$. Networks take as input embeddings $[\mat{E}_a,\mat{E}_b]\in\mathbb{R}^{2d}$ and predict a categorical output $c$. 
Both embeddings and network parameters are learned.
In preliminary experiments, we train two different network architectures on the modular arithmetic task, which we refer to as: Model A and Model B. \textbf{Model A} is a one-layer ReLU transformer \cite{vaswani2017attention} with constant attention, while \textbf{Model B} is a standard one-layer ReLU transformer (see Appendix \ref{sec:transformerdetails} for details). As attention is not involved in Model A, it can also be understood as a ReLU MLP (Appendix \ref{sec:mathtransformer}). %

\subsection{Review of the \textit{Clock} Algorithm}\label{sec:clock}

As in past work, we find that after training both Model A and Model B, embeddings ($\mat{E}_a, \mat{E}_b$ in  Figure~\ref{fig:clock-pizza}) usually describe a circle~\cite{liu2022towards} in the plane spanned by the first two principal components of the embedding matrix. 
Formally, $\mat{E}_a\approx[{\rm cos}(w_ka),{\rm sin}(w_ka)]$ where $w_k=2\pi k/p$, $k$ is an integer in $[1,p-1]$. Nanda et al. ~\cite{nanda2023progress} discovered a circuit that uses these circular embeddings to implement an interpretable algorithm for modular arithmetic, which we call the \emph{Clock} algorithm. %

"If a meeting starts at 10, and lasts for 3 hours, then it will end at 1." This familiar fact is a description of a modular sum, $10+3=1\ ({\rm mod}\ 12)$, and the movement of a clock describes a simple algorithm for modular arithmetic: the numbers 1 through 12 are arranged on a circle in $360^\circ/12=30^\circ$ increments, angles of $10\times 30^\circ$ and $3\times 30^\circ$ are added together, then this angle is evaluated to determine that it corresponds to $1 \times 30^\circ$. 

Remarkably, Nanda et al.~\cite{nanda2023progress} find that neural networks like our Model B implement this \emph{Clock} algorithm, visualized in Figure~\ref{fig:clock-pizza} (left): %
they represent tokens $a$ and $b$ as 2D vectors, and adding their polar angles using trigonometric identities. Concretely, the \emph{Clock} algorithm consists of three steps: In step 1, tokens $a$ and $b$ are embedded as $\mat{E}_a = [{\rm cos}(w_ka),{\rm sin}(w_ka)]$ and $\mat{E}_b = [{\rm cos}(w_kb),{\rm sin}(w_kb)]$, respectively, where $w_k= 2\pi k/p$ (an everyday clock has $p=12$ and $k=1$). Then the polar angles of $\mat{E}_a$ and $\mat{E}_b$ are added (in step 2) and extracted (in step 3) via trigonometric identities. For each candidate output $c$, we denote the logit $Q_{abc}$; the predicted output is $c^*={\rm argmax}_c ~ Q_{abc}$.

Crucial to this algorithm is the fact that the attention mechanism can be leveraged to perform multiplication.
What happens in model variants when the attention mechanism is absent, as in Model A? We find two pieces of evidence of deviation from the \emph{Clock} algorithm in Model A.

\subsection{First Evidence for \emph{Clock} Violation: Gradient Symmetricity}\label{sec:early-gradient-symmetry}

Since the \emph{Clock} algorithm has logits:
\begin{equation}
    Q_{abc}^{\rm Clock} = (\mat{E}_{a,x}\mat{E}_{b,x}-\mat{E}_{a,y}\mat{E}_{b,y})\mat{E}_{c,x} + (\mat{E}_{a,x}\mat{E}_{b,y}+\mat{E}_{a,y}\mat{E}_{b,x})\mat{E}_{c,y},
\end{equation}
(see \autoref{fig:clock-pizza}) the gradients of $Q_{abc}$ generically lack permutation symmetry in argument order:
$\nabla_{\mat{E}_a} Q_{abc}\neq \nabla_{\mat{E}_b} Q_{abc}$.
Thus, if learned models exhibit permutation symmetry ($\nabla_{\mat{E}_a} Q_{abc} = \nabla_{\mat{E}_b} Q_{abc}$), they must be implementing some other algorithm.

We compute the 6 largest principal components of the input embedding vectors.
We then compute the gradients of output logits (unnormalized log-probabilities from the model) with respect to the input embeddings. We then project them onto these 6 principal components 
(since the angles relevant to the \emph{Clock} and \emph{Pizza} algorithms are encoded in the first few principal components).
These projections are shown in \autoref{fig:efflc}.
While Model B demonstrates asymmetry in general, Model A exhibits gradient symmetry. %

\begin{figure}[!h]
    \centering
    \def\svgwidth{0.7\columnwidth}
    \vspace{-1.5mm}
    \includegraphics[width=0.7\textwidth]{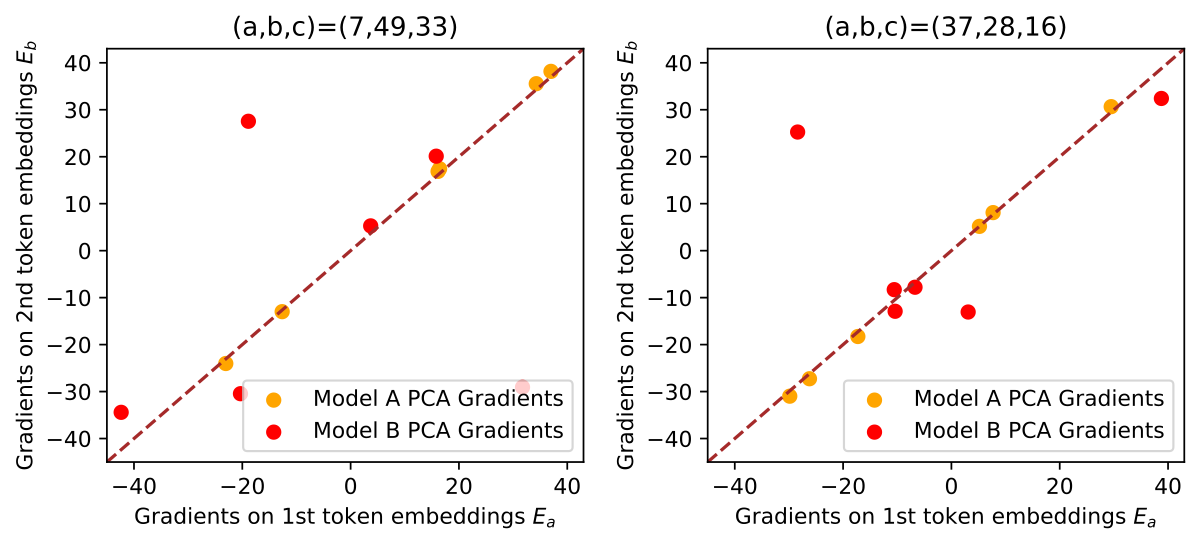}
    \caption{Gradients on first six principal components of input embeddings. $(a,b,c)$ in the title stands for taking gradients on the output logit $c$ for input $(a,b)$. x and y axes represent the gradients for embeddings of the first and the second token. The dashed line $y=x$ signals a symmetric gradient.} %
    \vspace{-1mm}
    \label{fig:efflc}
\end{figure}

\subsection{Second Evidence for \textit{Clock} Violation: Logit Patterns}\label{sec:early-logit-pattern}

Inspecting models' outputs, in addition to inputs, reveals further differences.
For each input pair $(a,b)$, we compute the output logit assigned to the correct label $a+b$. We visualize these \textit{correct logits} from Models A and B in Figure \ref{fig:corlogit}. Notice that the rows are indexed by $a-b$ and the columns by $a+b$. From Figure \ref{fig:corlogit}, we can see that the correct logits of Model A have a clear dependency on $a-b$ in that within each row, the correct logits are roughly the same, while this pattern is not observed in Model B. This suggests that Models A and B are implementing different algorithms. %

\begin{figure}[h]
    \centering
    \def\svgwidth{0.9\columnwidth}
    \includegraphics[width=0.9\textwidth]{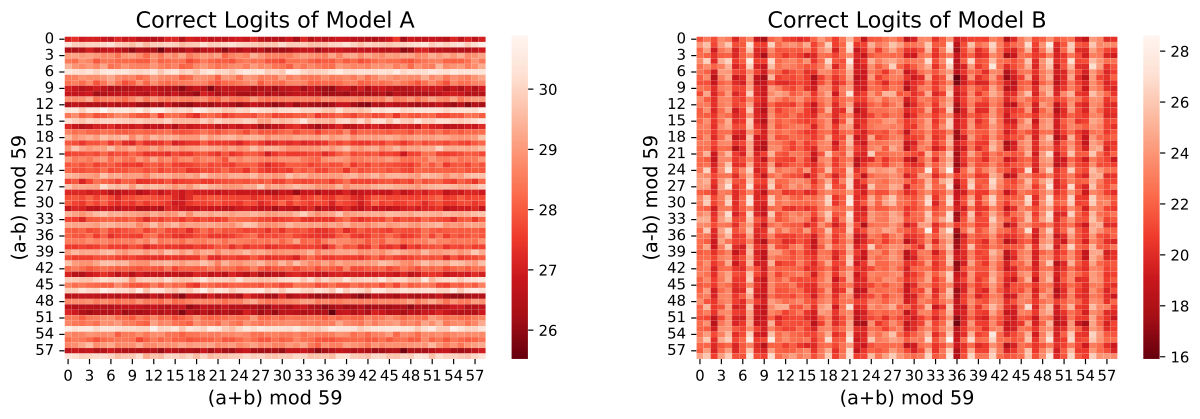}
    \caption{Correct Logits of Model A \& Model B. The correct logits of Model A (left) have a clear dependence on $a-b$, while those of Model B (right) do not.} %
    \label{fig:corlogit}
\end{figure}

\section{An Alternative Solution: the \emph{Pizza} Algorithm}\label{sec:pizzaintro}

How does Model A perform modular arithmetic?
Whatever solution it implements must exhibit gradient symmetricity in Figure~\ref{fig:efflc} and the output patterns in Figure~\ref{fig:corlogit}. In this section, we describe a new algorithm for modular arithmetic, which we call the \emph{Pizza} algorithm, and then provide evidence that this is the procedure implemented by Model A.

\subsection{The \emph{Pizza} Algorithm}

Unlike the \emph{Clock} algorithm, the \emph{Pizza} algorithm operates \textit{inside} the circle formed by embeddings (just as pepperoni are spread all over a pizza), instead of operating on the circumference of the circle.
The basic idea is illustrated in Figure~\ref{fig:clock-pizza}: given a fixed label $c$, for \emph{all} $(a,b)$ with $a+b=c\ ({\rm mod}\ p)$, the points $\mat{E}_{ab}=(\mat{E}_a+\mat{E}_b)/2$ lie on a line though the origin of a 2D plane, and the points closer to this line than to the lines corresponding to any other $c$ form two out of $2p$ mirrored ``pizza slices'', as shown at the right of the figure. Thus, to perform modular arithmetic, a network can determine which slice pair the average of the two embedding vectors lies in. Concretely, the \textit{Pizza} algorithm also consists of three steps. Step 1 is the same as in the \textit{Clock} algorithm: the tokens $a$ and $b$ are embedded at $\mat{E}_a=({\rm cos}(w_ka),{\rm sin}(w_ka))$ and $\mat{E}_b=({\rm cos}(w_kb),{\rm sin}(w_kb))$, respectively. Step 2 and Step 3 are different from the \textit{Clock} algorithm. In Step 2.1, $\mat{E}_a$ and $\mat{E}_b$ are averaged to produce an embedding $\mat{E}_{ab}$. In Step 2.2 and Step 3, the polar angle of $\mat{E}_{ab}$ is (implicitly) computed by computing the logit $Q_{abc}$ for any possible outputs $c$. While one possibility of doing so is to take the absolute value of the dot product of $\mat{E}_{ab}$ with $(\cos(w_k c/2),\sin(w_k c/2))$, it is not commonly observed in neural networks (and will result in a different logit pattern). Instead, Step 2.2 transforms $\mat{E}_{ab}$ into a vector encoding $|\cos(w_k(a-b)/2)| (\cos(w_k(a+b)), \sin(w_k(a+b)))$, which is then dotted with the output embedding $U_c=(\cos (w_k c),\sin (w_k c))$. Finally, the prediction is $c^*={\rm argmax}_cQ_{abc}$. See Appendix \ref{sec:mathpizza} and Appendix \ref{sec:pizzadetail} for a more detailed analysis of a neural circuit that computes $\mat{H}_{ab}$ in a real network.

The key difference between the two algorithms lies in what non-linear operations are required: \emph{Clock} requires multiplication of inputs in Step 2, while \emph{Pizza} requires only absolute value computation, which is easily implemented by the ReLU layers. If neural networks lack inductive biases toward implementing multiplication, they may be more likely to implement \emph{Pizza} rather than \emph{Clock}, as we will verify in Section~\ref{sec:phase-transition}.

\subsection{First Evidence for \emph{Pizza}: Logit Patterns}
Both the \emph{Clock} and \emph{Pizza} algorithms compute logits $Q_{abc}$ in Step 3, but they have different forms, shown in Figure~\ref{fig:clock-pizza}. Specifically, $Q_{abc}({\it Pizza})$ has an extra multiplicative factor $|{\rm cos}(w_k(a-b)/2)|$ compared to $Q_{abc}({\it Clock})$. As a result, given $c=a+b$, $Q_{abc}({\it Pizza})$ is dependent on $a-b$, but $Q_{abc}({\it Clock})$ is not. The intuition for the dependence is that a sample is more likely to be classified correctly if $\mat{E}_{ab}$ is longer. The norm of this vector depends on $a-b$. As we observe in Figure~\ref{fig:corlogit}, the logits in Model A indeed exhibit a strong dependence on $a-b$. %

\subsection{Second Evidence for \emph{Pizza}: Clearer Logit Patterns via Circle Isolation}\label{sec:ciriso}

To better understand the behavior of this algorithm, we replace the embedding matrix $\mat{E}$ with a series of rank-2 approximations: using only the first and second principal components, or only the third and fourth, etc. For each such matrix, embeddings lie in a a two-dimensional subspace. For both Model A and Model B, we find that embeddings form a circle in this subspace (\autoref{fig:corlogitpizza} and \autoref{fig:corlogitclock}, bottom).
We call this procedure \emph{circle isolation}.
Even after this drastic modification to the trained models' parameters, both Model A and Model B continue to behave in interpretable ways: a subset of predictions remain highly accurate, with this subset determined by the periodicity of the $k$ of the isolated circle. As predicted by the \emph{Pizza} and \emph{Clock} algorithms described in \autoref{fig:clock-pizza}, Model A's accuracy drops to zero at specific values of $a - b$, while Model B's accuracy is invariant in $a - b$.
Applying circle isolation to Model A on the two principal components (one circle) yields a model with $32.8\%$ overall accuracy, while retaining the first six principal components (three circles) yields an overall accuracy of $91.4\%$. See \autoref{sec:pizzapairs} for more discussion. By contrast, Model B achieves $100\%$ when embeddings are truncated to the first six principal components.
Circle isolation thus reveals an \textit{error correction} mechanism achieved via ensembling: when an algorithm (clock or pizza) exhibits systematic errors on subset of inputs, models can implement multiple algorithm variants in parallel to obtain more robust predictions.

\begin{figure}[!h]
    \centering
    \def\svgwidth{\columnwidth}
    \includegraphics[width=\textwidth]{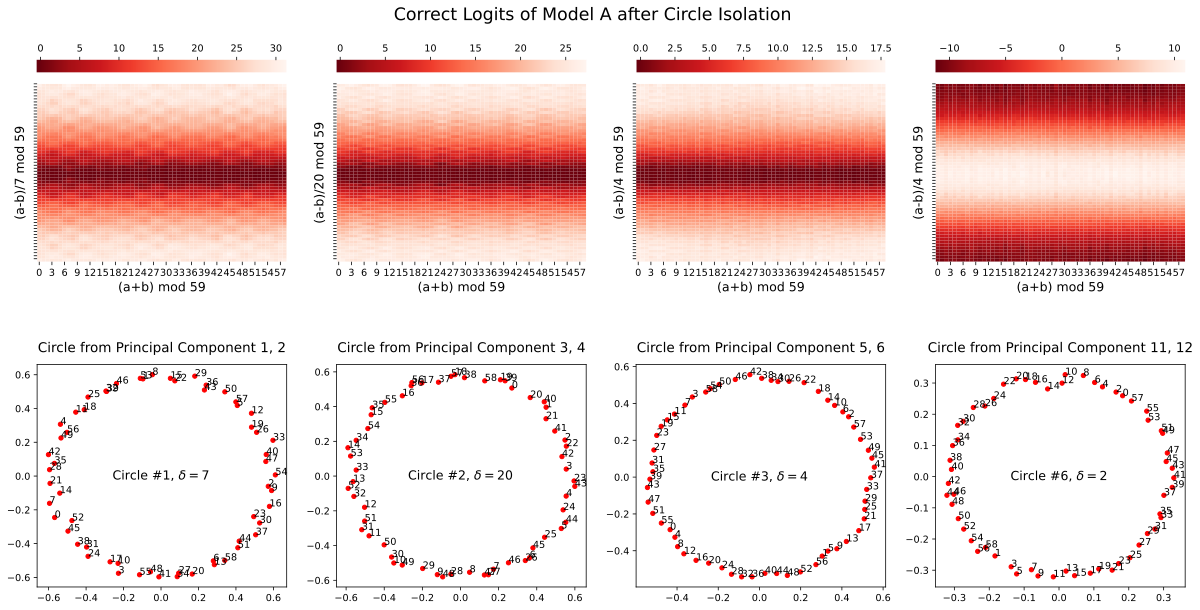}
    \caption{Correct logits of Model A (\textit{Pizza}) after circle isolation. The rightmost pizza is accompanying the third pizza (discussed in Section \ref{sec:accompany} and Appendix \ref{sec:pizzapairs}). \textit{Top:} The logit pattern depends on $a-b$. \textit{Bottom:} Embeddings for each circle.} %
    \vspace{-1mm}
    \label{fig:corlogitpizza}
\end{figure}

\begin{figure}[!h]
    \centering
    \def\svgwidth{\columnwidth}
    \includegraphics[width=\textwidth]{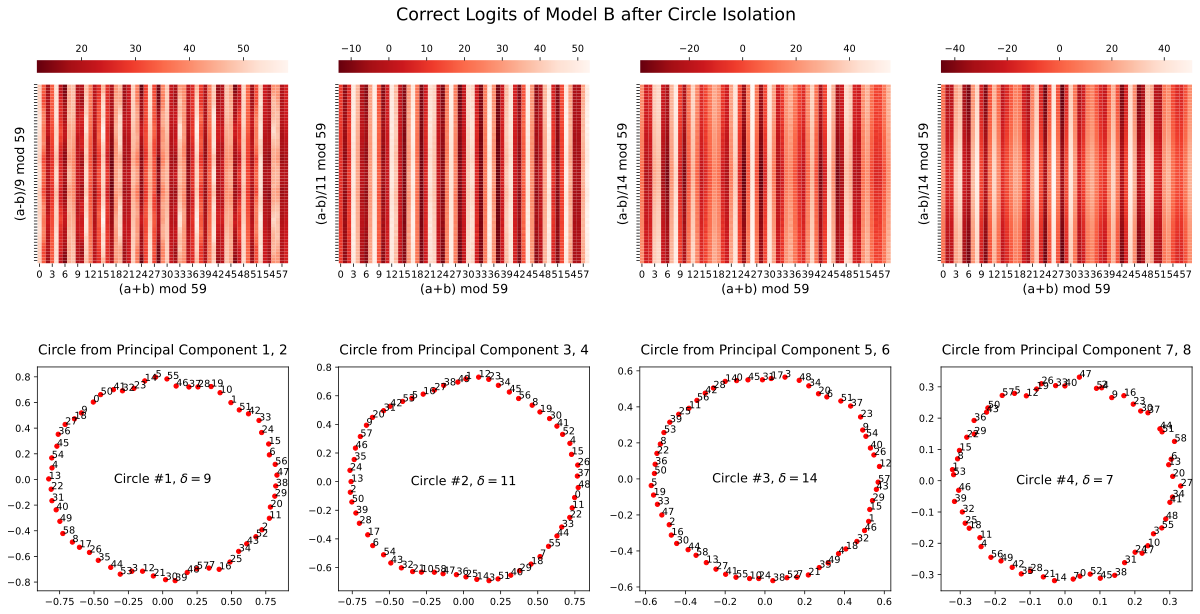}
    \caption{Correct logits of Model B (\textit{Clock}) after circle isolation. \textit{Top:} The logit pattern depends on $a+b$. \textit{Bottom:} Embeddings for each circle.} %
    \vspace{-1mm}
    \label{fig:corlogitclock}
\end{figure}

Using these isolated embeddings, we may additionally calculate the isolated logits directly with formulas in Figure~\ref{fig:clock-pizza} and compare with the actual logits from Model A. Results are displayed in Table \ref{tab:isolation}. We find that  $Q_{abc}({\it Pizza})$ explains substantially more variance than $Q_{abc}({\it Clock})$. %

{\bf Why do we only analyze correct logits?} The logits from the \textit{Pizza} algorithm are given by $Q_{abc}({\it Pizza})=\lvert\cos(w_k(a-b)/2)\rvert\cos(w_k(a+b-c))$. By contrast, the \textit{Clock} algorithm has logits $Q_{abc}({\it Clock})=\cos(w_k(a+b-c))$. In a word, $Q_{abc}({\it Pizza})$ has an extra multiplicative factor $|\cos(w_k(a-b)/2)|$ compared to $Q_{abc}({\it Clock})$. By constraining $c=a+b$ (thus ${\rm cos}(w_k(a+b-c))=1$), the factor $\lvert\cos(w_k(a-b)/2)\rvert$ can be identified.

{\bf (Unexpected) dependence of logits $Q_{abc}({\it Clock})$ on $a+b$}: Although our analysis above expects logits $Q_{abc}({\it Clock})$ not to depend on $a-b$, they do not predict its dependence on $a+b$. In Figure~\ref{fig:corlogitclock}, we surprisingly find that $Q_{abc}({\it Clock})$ is sensitive to this sum. Our conjecture is that Step 1 and Step 2 of the \emph{Clock} are implemented (almost) noiselessly, such that same-label samples collapse to the same point after Step 2. However, Step 3 (classification) is imperfect after circle isolation, resulting in fluctuations of logits. Inputs with common sums $a+b$ produce the same logits.

\vspace{1mm}

\begin{table}[!h]
\centering
\begin{tabular}{l|l|l|l}
       Circle & $w_k$ & $Q_{abc}({\rm clock})$ FVE & $Q_{abc}({\rm pizza})$ FVE \\ \hline
      \#1 & $2\pi/59\cdot 17$ & 75.41\% & 99.18\% \\ \hline
       \#2 & $2\pi/59\cdot 3$ & 75.62\% & 99.18\%\\ \hline
      \#3 & $2\pi/59\cdot 44$ & 75.38\% & 99.28\%\\
\end{tabular}
\vspace{3mm}
    \caption{After isolating circles in the input embedding, fraction of variance explained (FVE) of \textbf{all} Model A's output logits ($59 \times 59\times 59$ of them) by various formulas. Both model output logits and formula results' are normalized to mean $0$ variance $1$ before taking FVE. $w_k$'s are calculated according to the visualization. For example, distance between $0$ and $1$ in Circle \#1 is $17$, so $w_k=2\pi/59\cdot 17$.}
    \label{tab:isolation}
\end{table}
\vspace{-3mm}

\subsection{Third Evidence for \emph{Pizza}: Accompanied \& Accompanying Pizza} \label{sec:accompany}

The Achilles' heel of the \emph{Pizza} algorithm is antipodal pairs. If two inputs $(a,b)$ happen to lie antipodally, then their middle point will lie at the origin, where the correct ``pizza slice'' is difficult to identify. For example in Figure~\ref{fig:clock-pizza} right, antipodal pairs are (1,7), (2,8), (3,9) etc., whose middle points all collapse to the origin, but their class labels are different. Models cannot distinguish between, and thus correctly classify, these pairs. 
Even for odd $p$'s where there are no strict antipodal pairs, approximately antipodal pairs are also more likely to be classified incorrectly than non-antipodal pairs. 

Intriguingly, neural networks find a clever way to compensate for this failure mode. we find that pizzas usually come with  ``accompanying pizzas''. An accompanied pizza and its accompanying pizza complement each other in the sense that near-antipodal pairs in the accompanied pizza become adjacent or close (i.e, very non-antipodal) in the accompanying pizza. If we denote the difference between adjacent numbers on the circle as $\delta$ and $\delta_1$, $\delta_2$ for accompanied and accompanying pizzas, respectively, then $\delta_1=2\delta_2\ ({\rm mod}\ p)$. In the experiment, we found that pizzas \#1/\#2/\#3 in Figure~\ref{fig:corlogitpizza} all have accompanying pizzas, which we call pizzas \#4/\#5/\#6 (see Appendix~\ref{sec:pizzapairs} for details). However, these accompanying pizzas do not play a significant role in final model predictions~\footnote{Accompanied pizzas \#1/\#2/\#3 can achieve 99.7\% accuracy, but accompanying pizzas \#4/\#5/\#6 can only achieve 16.7\% accuracy.}. We conjecture that training dynamics are as follows: (1) At initialization, pizzas \#1/\#2/\#3 correspond to three different ``lottery tickets'' \cite{frankle2018lottery}. (2) In early stages of training, to compensate the weaknesses (antipodal pairs) of pizzas \#1/\#2/\#3, pizzas \#4/\#5/\#6 are formed. (3) As training goes on (in the presence of weight decay), the neural network gets pruned. As a result, pizzas \#4/\#5/\#6 are not significantly involved in prediction, although they continue to be visible in the embedding space.

\section{The Algorithmic Phase Space}\label{sec:phase-transition}
In Section~\ref{sec:pizzaintro}, we have demonstrated a typical \emph{Clock} (Model A) and a typical \emph{Pizza} (Model B). In this section, we study how architectures and hyperparametes govern the selection of these two algorithmic ``phases''. In Section~\ref{sec:attnrate}, we propose quantitative metrics that can distinguish between \emph{Pizza} and \emph{Clock}. In Section~\ref{sec:results}, we observe how these metrics behave with different architectures and hyperparameters, demonstrating sharp phase transitions. The results in this section focus \emph{Clock} and \emph{Pizza} models, but other algorithmic solutions to modular addition are also discovered, and explored in more detail in Appendix~\ref{sec:non-circular}.

\subsection{Metrics}\label{sec:attnrate}

We wish to study the distribution of \emph{Pizza} and \emph{Clock} algorithms statistically, which will require us to distinguish between two algorithms automatically. In order to do so, we formalize our observations in Section \ref{sec:early-gradient-symmetry} and \ref{sec:early-logit-pattern}, arriving at two metrics: \textbf{gradient symmetricity} and \textbf{distance irrelevance}.

\subsubsection{Gradient Symmetricity}

To measure the symmetricity of the gradients, we select some input-output group $(a,b,c)$, compute the gradient vectors for the output logit at position $c$ with respect to the input embeddings, and then compute the cosine similarity. Taking the  average over many pairs yields the gradient symmetricity.

\begin{definition}[Gradient Symmetricity]
    For a fixed set $S\subseteq \mathbb Z_p^3$ of input-output pairs\footnotemark, define \textbf{gradient-symmetricity} of a network $M$ with embedding layer $E$ as $$s_g\equiv \frac{1}{|S|}\sum_{(a,b,c)\in S}\text{sim}\left(\frac{\partial Q_{abc}}{\partial \mat{E}_a},\frac{\partial Q_{abc}}{\partial \mat{E}_b}\right),$$ where $\text{sim}(a,b)=\frac{a\cdot b}{|a||b|}$ is the cosine-similarity, $Q_{abc}$ is the logit for class $c$ given input $a$ and $b$. It is clear that $s_g\in [-1,1]$.
\end{definition}
\footnotetext{To speed-up the calculations, in our experiments $S$ is taken as a random subset of $\mathbb Z_p^3$ of size $100$.}

As we discussed in Section \ref{sec:early-gradient-symmetry}, the \emph{Pizza} algorithm has symmetric gradients while the \emph{Clock} algorithm has asymmetric ones. Model A and Model B in Section \ref{sec:pizzaintro} have gradient symmetricity $99.37\%$ and $33.36\%$, respectively (Figure \ref{fig:efflc}).

\subsubsection{Distance Irrelevance}

To measure the dependence of correct logits on differences between two inputs, which reflect the distances of the inputs on the circles, we measure how much of the variance in the correct logit matrix depends on it. We do so by comparing the average standard deviation of correct logits from inputs with the same differences and the standard deviation from all inputs. %

\begin{definition}[Distance Irrelevance]

For some network $M$ with correct logit matrix $L$ ($L_{i,j}=Q_{ij,i+j}$), define its \textbf{distance irrelevance} as $$q\equiv \frac{\frac{1}{p}\sum_{d\in \mathbb Z_p}\mathrm{std}\left(L_{i,i+d}\mid i\in \mathbb Z_p\right)}{\mathrm{std}\left(L_{i,j}\mid i,j\in \mathbb Z_p^2\right)},$$ where $\mathrm{std}$ computes the standard deviation of a set. It is clear that $q\in [0,1]$.
\end{definition}

Model A and Model B in Section \ref{sec:pizzaintro} give distance irrelevance 0.17 and 0.85, respectively (Figure \ref{fig:corlogit}). A typical distance irrelevance from the \emph{Pizza} algorithm ranges from 0 to 0.4 while a typical distance irrelevance from \emph{Clock} algorithm ranges from 0.4 to 1.

\subsubsection{Which Metric is More Decisive?}

When the two metrics have conflicting results, which one is more decisive? We consider distance irrelevance as the decisive factor of the {\it Pizza} algorithm, as the output logits being dependent on the distance is highly suggestive of {\it Pizza}. On the other hand, gradient symmetricity can be used to rule out the {\it Clock} algorithm, as it requires multiplying (transformed) inputs which will result in asymmetric gradients. Figure \ref{fig:metricvs} confirmed that at low distance irrelevance (suggesting pizza) the gradient symmetricity is almost always close to 1 (suggesting non-clock).

\begin{figure}[!h]
    \centering
    \includegraphics[width=0.8\linewidth]{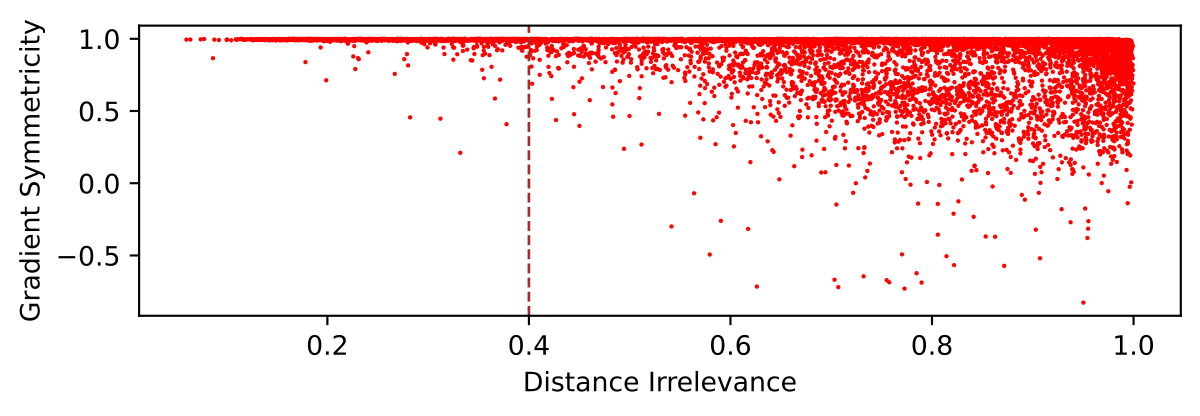}
    \caption{Distance irrelevance vs gradient symmetricity over all the main experiments.}
    \label{fig:metricvs}
\end{figure}

\subsection{Identifying algorithmic phase transitions}\label{sec:results}

How do models ``choose'' whether to implement the \emph{Clock} or \emph{Pizza} algorithm? We investigate this question by interpolating between Model A (transformer without attention) and Model B (transformer with attention). To do so, we introduce a new hyperparameter $\alpha$ we call the \textbf{attention rate}.

For a model with attention rate $\alpha$, we modify the attention matrix $M$ for each attention head to be $M'=M\alpha+J(1-\alpha)$.
In other words, we modify this matrix to consist of a linear interpolation between the all-one matrix and the original attention (post-softmax), with the rate $\alpha$ controlling how much of the attention is kept. The transformer with and without attention corresponds to the case where $\alpha=1$ (attention kept) and $\alpha=0$ (constant attention matrix). With this parameter, we can control the balance of attention versus linear layers in transformers.

We performed the following set of experiments on transformers (see Appendix \ref{sec:transformerdetails} for architecture and training details). (1) One-layer transformers with width $128$ and attention rate uniformly sampled in $[0,1]$ (Figure \ref{fig:onelayer}). (2) One-layer transformers with width log-uniformly sampled in $[32,512]$ and attention rate uniformly sampled in $[0,1]$ (Figure \ref{fig:variouswidth}). (3) Transformers with $2$ to $4$ layers, width $128$ and attention rate uniformly sampled in $[0,1]$ (Figure \ref{fig:multilayers}).

\begin{figure}[!h]
    \centering
    \includegraphics[width=\textwidth]{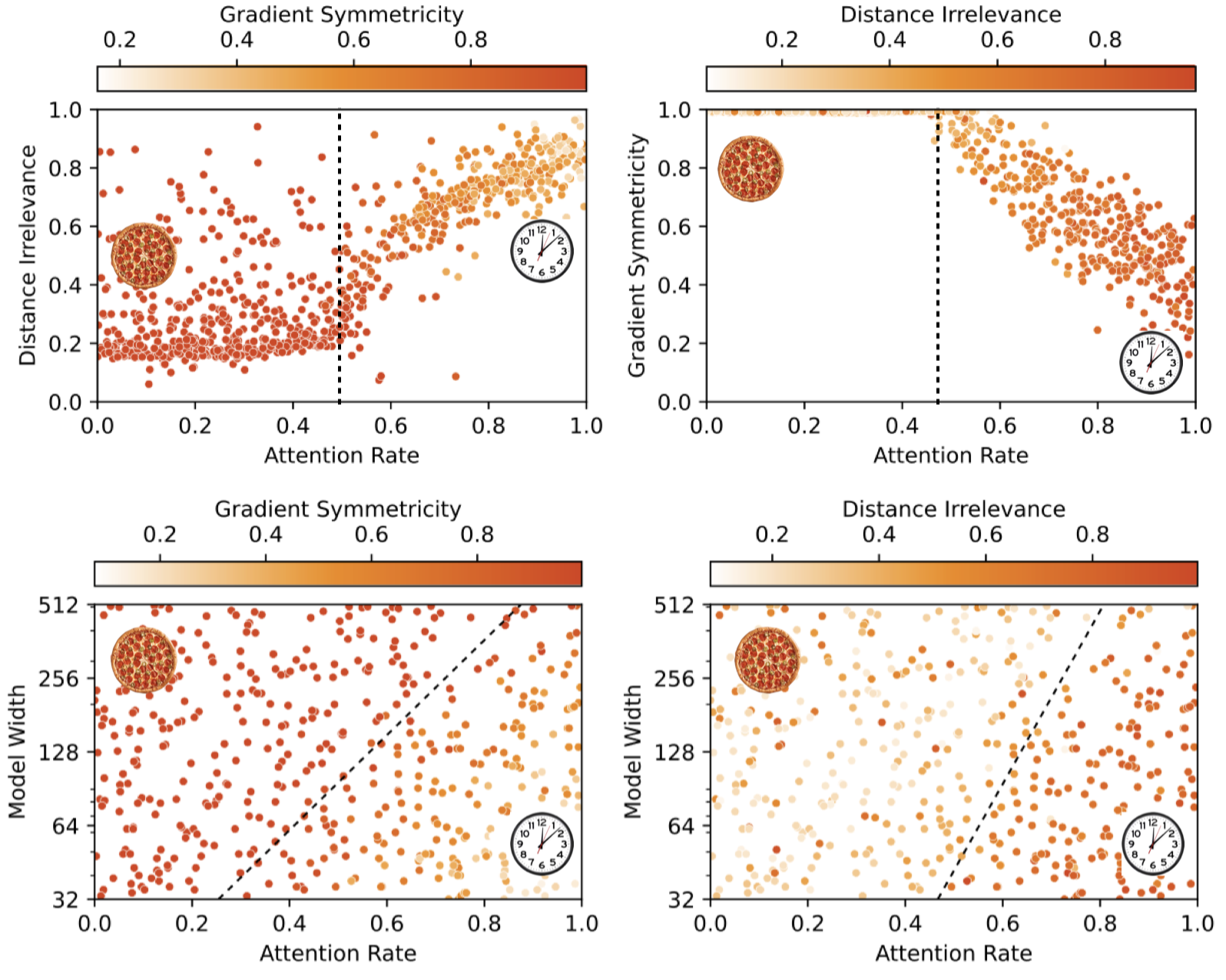}
    \caption{Training results from 1-layer transformers. Each point in the plots represents a training run reaching circular embeddings and 100\% validation accuracy. See Appendix \ref{sec:moreres} for additional plots. \textit{Top:} Model width fixed to be 128. \textit{Bottom:} Model width varies. The phase transition lines are calculated by logistic regression (classify the runs by whether gradient symmetricity $>98\%$ and whether distance irrelevance $<0.6$).} %
    \label{fig:onelayer}
    \label{fig:variouswidth}
\end{figure}

\paragraph{The \emph{Pizza} and the \emph{Clock} algorithms are the dominating algorithms with circular embeddings.} For circular models, most observed models either have low gradient symmetricity (corresponding to the \emph{Clock} algorithm) or low distance irrelevance (corresponding to the \emph{Pizza} algorithm).

\paragraph{Two-dimensional phase change observed for attention rate and layer width.} For the fixed-width experiment, we observed a clear phase transition from the \emph{Pizza} algorithm to the \emph{Clock} algorithm (characterized by gradient symmetricity and distance irrelevance). We also observe an almost linear phase boundary with regards to both attention rate and layer width. In other words, the attention rate transition point increases as the model gets wider.

\paragraph{Dominance of linear layers determines whether the \emph{Pizza} or the \emph{Clock} algorithm is preferred.} For one-layer transformers, we study the transition point against the attention rate and the width:

\begin{itemize}
    \item The \emph{Clock} algorithm dominates when the attention rate is higher than the phase change point, and the \emph{Pizza} algorithm dominates when the attention rate is lower than the point. Our explanation is: At a high attention rate, the attention mechanism is more prominent in the network, giving rise to the clock algorithm. At a low attention rate, the linear layers are more prominent, giving rise to the pizza algorithm.
    \item The phase change point gets higher when the model width increases. Our explanation is: When the model gets wider, the linear layers become more capable while the attention mechanism receive less benefit (attentions remain scalars while outputs from linear layers become wider vectors). The linear layer therefore gets more prominence with a wider model.
\end{itemize}

\paragraph{Possibly hybrid algorithms between the \textit{Clock} and the \textit{Pizza} algorithms.} The continuous phase change suggests the existence of networks that lie between the {\it Clock} and the {\it Pizza} algorithms. This is achievable by having some principal components acting as the \textit{Clock} and some principal components acting as the \textit{Pizza}.

\paragraph{Existence of non-circular algorithms.}  Although our presentation focuses on circular algorithms (i.e., whose embeddings are circular), we find non-circular algorithms (i.e., whose embeddings do not form a circle when projected onto any plane) to be present in neural networks. See Appendix \ref{sec:non-circular} for preliminary findings. We find that deeper networks are more likely to form non-circular algorithms. 
We also observe the appearance of non-circular networks at low attention rates. Nevertheless, the \textit{Pizza} algorithm continues to be observed (low distance irrelevance, high gradient symmetricity).

\section{Related Work}\label{sec:related-works}

{\bf Mechanistic interpretability} aims to mechanically understand neural networks by reverse engineering them~\cite{olah2020zoom, olsson2022context,michaud2023quantization,wang2023interpretability,elhage2021mathematical,elhage2022superposition,nanda2023progress,chughtai2023toy,liu2023omnigrok,conmy2023towards}. One can either look for patterns in weights and activations by studying single-neuron behavior (superposition~\cite{elhage2022superposition}, monosemantic neurons~\cite{gurnee2023finding}), or study meaningful modules or circuits grouped by neurons~\cite{wang2023interpretability,conmy2023towards}. Mechanistic interpretability is closely related to training dynamics~\cite{liu2022towards,liu2023omnigrok,nanda2023progress}. %

{\bf Learning mathematical tasks}: Mathematical tasks provide useful benchmarks for neural network interpretability, since the tasks themselves are well understood. The setup could be learning from images~\cite{hoshen2016visual, kim2020integration}, with trainable embeddings~\cite{power2022grokking}, or with number as inputs~\cite{barak2022hidden, michaud2023quantization}. Beyond arithmetic relations, machine learning has been applied to learn other mathematical structures, including geometry~\cite{he2021machine}, knot theory~\cite{gukov2021learning} and group theory~\cite{davies2021advancing}.

{\bf Algorithmic phase transitions}: Phase transitions are present in classical algorithms~\cite{hartmann2006phase} and in deep learning~\cite{akyurek2022learning,saitta2011phase,lubana2022mechanistic}. Usually the phase transition means that the algorithmic performance sharply changes when a parameter is varied (e.g., amount of data, network capacity etc). However, the phase transition studied in this paper is \emph{representational}: both clock and pizza give perfect accuracy, but arrive at answers via different interal computations. These model-internal phase transitions are harder to study, but closer to corresponding phenomena in physical systems~\cite{saitta2011phase}.

{\bf Algorithm learning in neural networks}: Emergent abilities in deep neural networks, especially large language models, have recently attracted significant attention~\cite{wei2022emergent}. An ability is ``emergent'' if the performance on a subtask suddenly increases with growing model sizes, though such claims depend on the choice of metric~\cite{schaeffer2023emergent}. It has been hypothesized that the emergence of specific capability in a model corresponds to the emergence of a modular circuit responsible for that capability, and that emergence of some model behaviors thus results from a sequence of quantized circuit discovery steps~\cite{michaud2023quantization}.

\section{Conclusions}\label{sec:conclusions}

We have offered a closer look at recent findings that familiar algorithms arise in neural networks trained on specific algorithmic tasks. 
In modular arithmetic, we have shown that such algorithmic discoveries are not inevitable: in addition to the \emph{Clock} algorithm reverse-engineered by~\cite{nanda2023progress}, we find other algorithms (including a \emph{Pizza} algorithm, and more complicated procedures) to be prevalent in trained models. These different algorithmic phases can be distinguished using a variety of new and existing interpretability techniques, including logit visualization, isolation of principle components in embedding space, and gradient-based measures of model symmetry. These techniques make it possible to \emph{automatically} classify trained networks according to the algorithms they implement, and reveal algorithmic phase transitions in the space of model hyperparameters. Here we found specifically that the emergence of a \emph{Pizza} or \emph{Clock} algorithm depends on the relative strength of linear layers and attention outputs. We additionally showed that these algorithms are not implemented in isolation; instead, networks sometimes ensemble multiple copies of an algorithm in parallel. These results offer exciting new challenges for mechanistic interpretability: (1) How to find, classify, and interpret unfamiliar algorithms in a systematic way? (2) How to disentangle multiple, parallel algorithm implementations in the presence of ensembling?

\paragraph{Limitations} We have focused on a single learning problem: modular addition. Even in this restricted domain, qualitatively different model behaviors emerge across architectures and seeds. Significant additional work is needed to scale these techniques to the even more complex models used in real-world tasks.

\paragraph{Broader Impact} We believe interpretability techniques can play a crucial role in creating and improving safe AI systems. However, they may also be used to build more accurate systems, with the attendant risks inherent in all dual-use technologies. It is therefore necessary to exercise caution and responsible decision-making when deploying such techniques.

\section*{Acknowledgement}

We would like to thank Mingyang Deng and anonymous reviewers for valuable and fruitful discussions and MIT SuperCloud for providing computation resources. ZL and MT are supported by the Foundational Questions Institute, the Rothberg Family Fund for Cognitive Science and IAIFI through NSF grant PHY-2019786. JA is supported by a gift from the OpenPhilanthropy Foundation.

\bibliography{ref}
\bibliographystyle{unsrt}

\clearpage

\begin{center}
    {\LARGE\bf  Supplementary material}
\end{center}

\begin{appendix}

\section{Mathematical Analysis and An Example of Pizza Algorithm} \label{sec:mathpizza}

In the pizza algorithm, we have $E_{ab}=\cos(w_k(a-b)/2)\cdot(\cos(w_k(a+b)/2),\sin(w_k(a+b)/2))$, as $\cos x+\cos y=\cos((x-y)/2)(2\cos((x+y)/2))$ and $\sin x+\sin y=\cos((x-y)/2)(2\sin((x+y)/2))$.

To get $\lvert\cos(w_k(a-b)/2)\rvert(\cos(w_k(a+b)),\sin(w_k(a+b)))$, we generalize this to 
$\lvert\cos(w_k(a-b)/2)\rvert\cos(w_k(a+b-u))$ (the two given cases correspond to $u=0$ and $u=\pi/2/w_k$). $$\begin{aligned}\lvert(\cos(w_ku/2),\sin(w_ku/2))\cdot E_{ab}\rvert=\lvert\cos(w_k(a-b)/2)\cos(w_k(a+b-u)/2)\rvert\\
\lvert(-\sin(w_ku/2),\cos(w_ku/2))\cdot E_{ab}\rvert=\lvert\cos(w_k(a-b)/2)\sin(w_k(a+b-u)/2)
\rvert\end{aligned}$$thus their difference will be equal to $$\lvert\cos(w_k(a-b)/2)\rvert(\lvert\cos(w_k(a+b-u)/2)\rvert-\lvert\sin(w_k(a+b-u)/2)\rvert).$$ Now notice $\lvert\cos(t)\rvert-\lvert\sin(t)\rvert\approx \cos(2t)$ for any $t\in \mathbb R$ (Figure \ref{fig:approx}), so the difference is approximately $\lvert\cos(w_k(a-b)/2)\rvert\cos(w_k(a+b-u))$.

\begin{figure}[!h]
    \centering
    \def\svgwidth{0.6\linewidth}
    \includegraphics[width=0.6\textwidth]{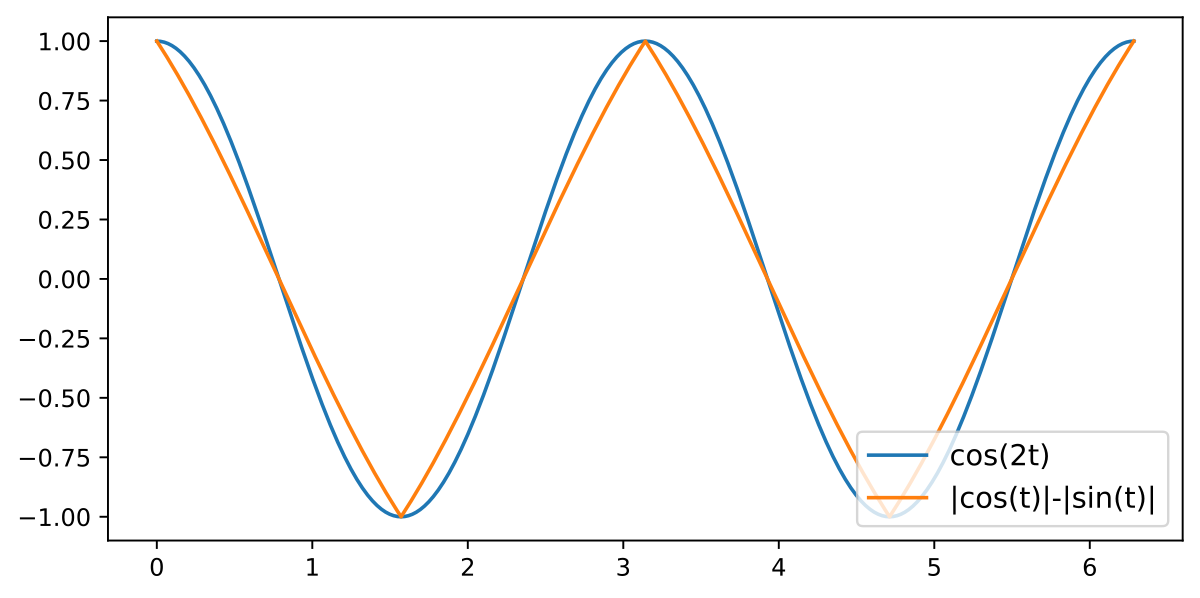}
    \vspace{-1mm}
    \caption{$\lvert\cos(t)\rvert-\lvert\sin(t)\rvert$ is approximately $\cos(2t)$ for any $t\in \mathbb R$}
    \label{fig:approx}
\end{figure}

Plugging in $u=0$ and $u=\pi/2/w_k$ as mentioned, we get the following particular implementation of the pizza algorithm.

\subsubsection*{Algorithm: Pizza, Example}

\begin{itemize}
    \item[Step 1] On given input $a$ and $b$, circularly embed them to two vectors on the circumference $(\cos(w_k a),\sin(w_k a))$ and $(\cos(w_k b),\sin(w_k b))$.
    \item[Step 2] Compute:
    $$\begin{aligned}
    \alpha&=\lvert \cos(w_k a)+\cos(w_k b)\rvert/2 - \lvert\sin(w_k a)+\sin(w_k b)\rvert/2\\
    &\approx \lvert \cos(w_k(a-b)/2)\rvert\cos(w_k(a+b))\\
    \beta&=\lvert\cos(w_k a)+\cos(w_k b)+\sin(w_k a)+\sin(w_k b)\rvert/(2\sqrt{2})\\ &-\lvert \cos(w_k a)+\cos(w_k b)-\sin(w_k a)-\sin(w_k b)\rvert/(2\sqrt{2})\\
    &=\lvert \cos(w_k a-\pi/4)+\cos(w_k b-\pi/4)\rvert/2 - \lvert\sin(w_k a-\pi/4)+\sin(w_k b-\pi/4)\rvert/2\\
    &\approx \lvert \cos(w_k(a-b)/2)\rvert\cos(w_k(a+b)-\pi/2)=\lvert \cos(w_k(a-b)/2)\rvert\sin(w_k(a+b))\end{aligned}$$
    \item[Step 3] Output of this pizza is computed as a dot product. $$Q'_{abc}=\alpha\cos(w_k c)+\beta \sin(w_k c)\approx\lvert\cos(w_k(a-b)/2)\rvert\cos(w_k(a+b-c))$$
\end{itemize}

Similar circuits are observed in the wild, but instead of the above two-term approximation, a more complicated one is observed. See Appendix \ref{sec:pizzadetail} for details.

The extra $\lvert\cos(w_k(a-b)/2)\rvert$ term is not a coincidence. We can generalize our derivation as the following.

\begin{lemma}\label{lemma:decomp}
A symmetric function $f(x,y)$ that is a linear combination of $\cos x,\sin x,\cos y,\sin y$\footnote{The actual neural networks could be more complicated - even if our neural network is locally linear and symmetric, locally they could be asymmetric (e.g. $|x|+|y|$ could locally be $x-y$). Nevertheless, the pattern is observed in our trained networks.} can always be written as $\cos((x-y)/2)g(x+y)$ for some function $g$.
\end{lemma}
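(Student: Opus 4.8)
The plan is to write the generic symmetric linear combination explicitly and factor it using the sum-to-product identities already invoked in Appendix~\ref{sec:mathpizza}. Write $f(x,y) = A\cos x + B\sin x + C\cos y + D\sin y$. The requirement that $f$ be symmetric, $f(x,y)=f(y,x)$ for all $x,y$, forces $A\cos x + B\sin x + C\cos y + D\sin y = A\cos y + B\sin y + C\cos x + D\sin x$; since $\{\cos x,\sin x,\cos y,\sin y\}$ are linearly independent as functions on $\mathbb{R}^2$, matching coefficients gives $A=C$ and $B=D$. Hence $f(x,y) = A(\cos x + \cos y) + B(\sin x + \sin y)$.

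Next I would apply the two sum-to-product identities stated at the top of Appendix~\ref{sec:mathpizza}: $\cos x + \cos y = 2\cos\!\big(\tfrac{x-y}{2}\big)\cos\!\big(\tfrac{x+y}{2}\big)$ and $\sin x + \sin y = 2\cos\!\big(\tfrac{x-y}{2}\big)\sin\!\big(\tfrac{x+y}{2}\big)$. Substituting,
\[
f(x,y) = 2\cos\!\Big(\tfrac{x-y}{2}\Big)\Big(A\cos\!\big(\tfrac{x+y}{2}\big) + B\sin\!\big(\tfrac{x+y}{2}\big)\Big).
\]
So defining $g(s) = 2\big(A\cos(s/2) + B\sin(s/2)\big)$ yields $f(x,y) = \cos\!\big(\tfrac{x-y}{2}\big)g(x+y)$, which is exactly the claimed form. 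Note $g$ depends only on the single variable $s=x+y$, as required, and the factor $\cos((x-y)/2)$ carries all the dependence on the antisymmetric combination; its magnitude $|\cos(w_k(a-b)/2)|$ reappears in the logit formula because the classifier reads off $|g|$-like quantities.

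The argument is essentially a one-line computation, so there is no serious obstacle; the only point needing a word of care is the linear-independence step used to extract $A=C$, $B=D$ — this is where symmetry of $f$ is actually consumed, and it is worth stating it cleanly rather than asserting it. A secondary subtlety, already flagged in the paper's footnote, is that "symmetric linear combination" is the idealization: a trained ReLU network is only piecewise linear and can be locally asymmetric even when globally symmetric, so the lemma describes the clean limiting behavior rather than a theorem about the network itself. I would keep the statement and proof at the idealized level and let the footnote handle that caveat.
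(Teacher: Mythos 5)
Your proof is correct and follows essentially the same route as the paper's: factor via the sum-to-product identities after reducing the symmetric combination to the form $A(\cos x+\cos y)+B(\sin x+\sin y)$. The only difference is that you explicitly justify $A=C$, $B=D$ via linear independence of $\{\cos x,\sin x,\cos y,\sin y\}$, whereas the paper takes this reduction as given; your version is slightly more complete.
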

\begin{proof}
Notice $\cos x+\cos y=\cos((x-y)/2)(2\cos((x+y)/2))$ and $\sin x+\sin y=\cos((x-y)/2)(2\sin((x+y)/2))$, so $\alpha(\cos x+\cos y)+\beta (\sin x+\sin y)=\cos((x-y)/2)(2\alpha\cos((x+y)/2)+2\beta\sin((x+y)/2))$.
\end{proof}

This is why we consider the output pattern with the $\lvert\cos(w_k(a-b)/2)\rvert$ terms rather than the actual computation circuits as the determinant feature of the pizza algorithm.

\section{Non-Circular algorithms}\label{sec:non-circular}

One thing that further complicates our experiment is the existence of non-circular embeddings. While only circular algorithms are reported in the previous works~\cite{liu2022towards,nanda2023progress}, many non-circular embeddings are found in our experiments, e.g., 1D lines or 3D Lissajous-like curves, as shown in Figure \ref{fig:noncirc}. We leave the detailed analysis of these non-circular algorithms for future study. Since circular algorithms are our primary focus of study, we propose the following metric \textbf{circularity} to filter out non-circular algorithms. The metric reaches maximum $1$ when the principal components aligns with cosine waves. %

\begin{figure}[!h]
    \centering
    \def\svgwidth{0.91\linewidth}
    \includegraphics[width=\linewidth]{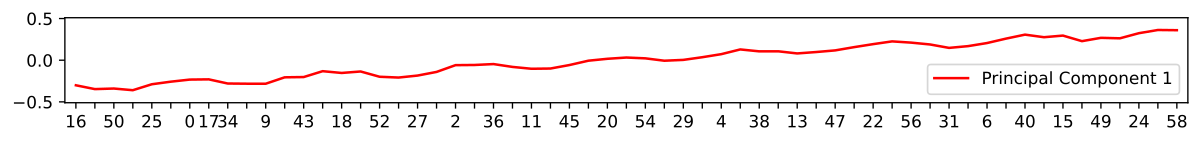}
    \def\svgwidth{0.9\linewidth}
    \includegraphics[width=\linewidth]{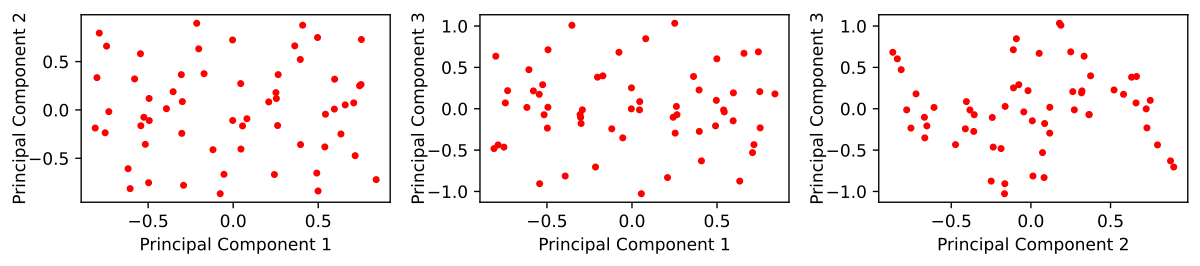}
    \caption{Visualization of the principal components of input embeddings for two trained non-circular models. \textit{Top:} A line-like first principal component. Notice the re-arranged x axis (token id). \textit{Bottom:} First three principal components forming a three-dimensional non-circular pattern. Each point represents the embedding of a token.}
    \label{fig:noncirc}
\end{figure}

\begin{definition}[Circularity]

For some network, suppose the $l$-th principal component of its input embeddings is $v_{l,0},v_{l,1},\cdots,v_{l,p-1}$, define its \textbf{circularity} based on first four components as $$c=\frac{1}{4}\sum_{l=1}^4\left(\max_{k\in[1,2,\cdots,p-1]}\left(\frac{2}{p{\sum_{j=0}^{p-1} v_{l,j}^{2}}}\left\lvert \sum_{j=0}^{p-1} v_{l,j}e^{2\pi \textbf{i}\cdot  jk/p}\right\rvert^2\right)\right)$$ where $\textbf{i}$ is the imaginary unit. $c\in[0,1]$ by Fourier analysis. $c=1$ means first four components are Fourier waves.  %
\end{definition}

Both Model A and Model B in Section \ref{sec:pizzaintro} have a circularity around $99.8\%$ and we consider models with circularity $\ge 99.5\%$ \textbf{circular}.%

\section{More Results from the Main Experiments} \label{sec:moreres}

Here we provide Figure \ref{fig:onelayer} with non-circular networks unfiltered (Figure \ref{fig:noncirc2}). We can see more noise emerging in the plot. We also provide the training results from multi-layer transformers (Figure \ref{fig:multilayers}).

\begin{figure}[!h]
    \centering
    \includegraphics[width=\textwidth]{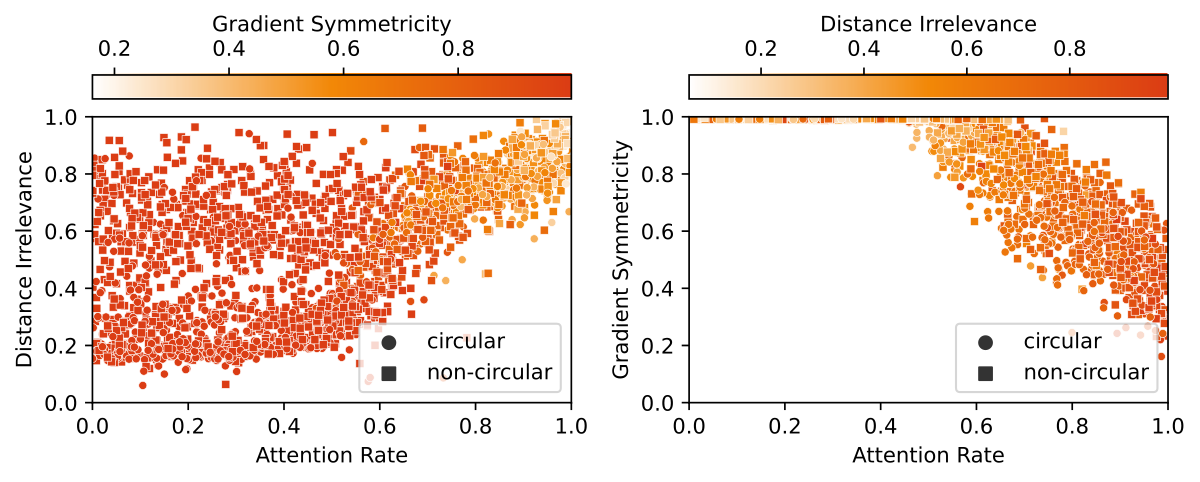}
    \includegraphics[width=\textwidth]{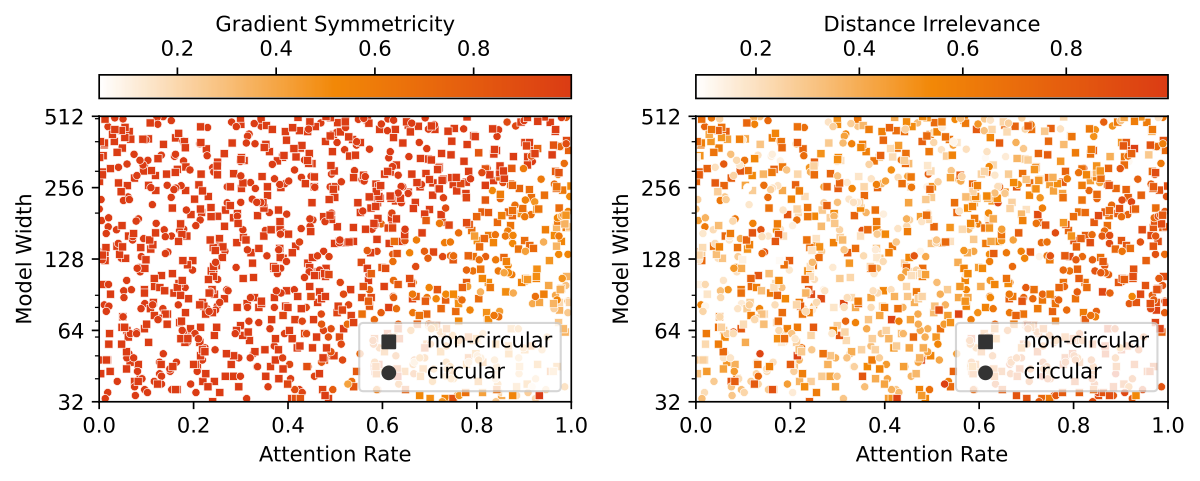}
    \vspace{-4mm}
    \caption{Training results from 1-layer transformers. Each point in the plots represents a training run reaching 100\% validation accuracy. Among all the trained 1-layer transformers, 34.31\% are circular. \textit{Top:} Model width fixed to be 128. \textit{Bottom:} Model width varies.} %
    \label{fig:noncirc2}
\end{figure}

\begin{figure}[!h]
    \centering
    \def\svgwidth{\linewidth}
    \includegraphics[width=\textwidth]{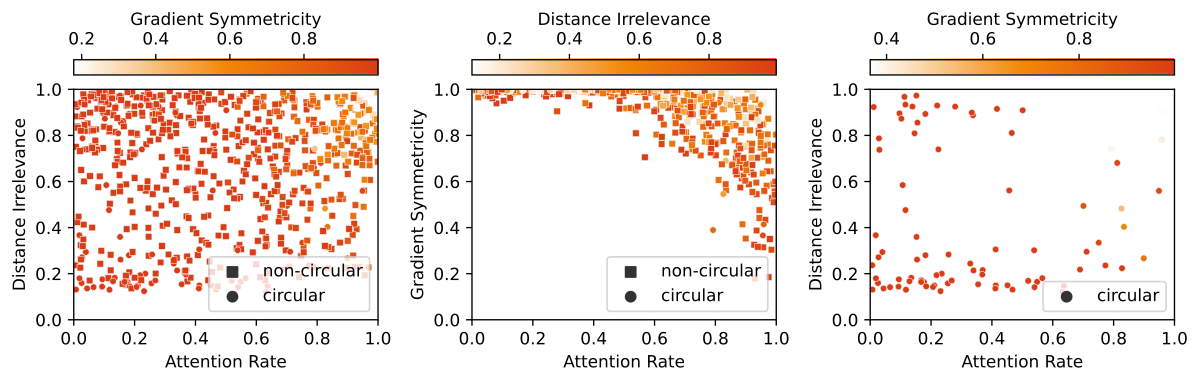}
    \def\svgwidth{\linewidth}
    \includegraphics[width=\textwidth]{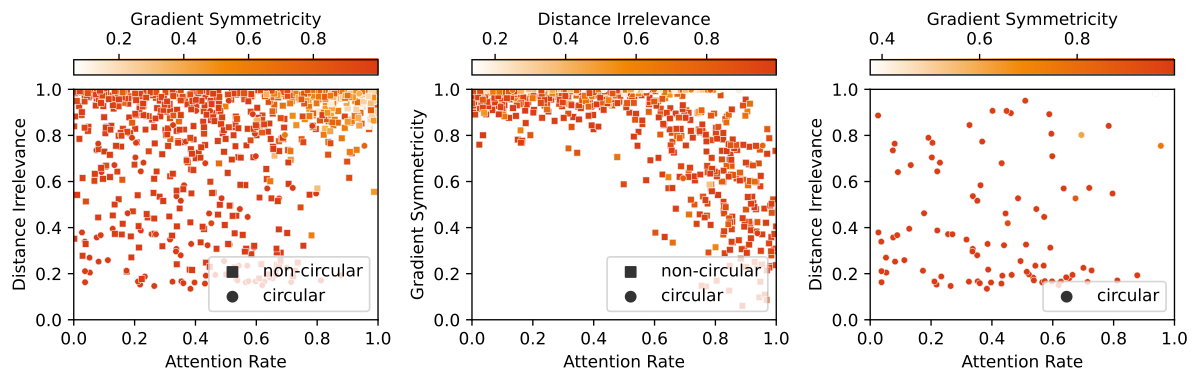}
    \def\svgwidth{\linewidth}
    \includegraphics[width=\textwidth]{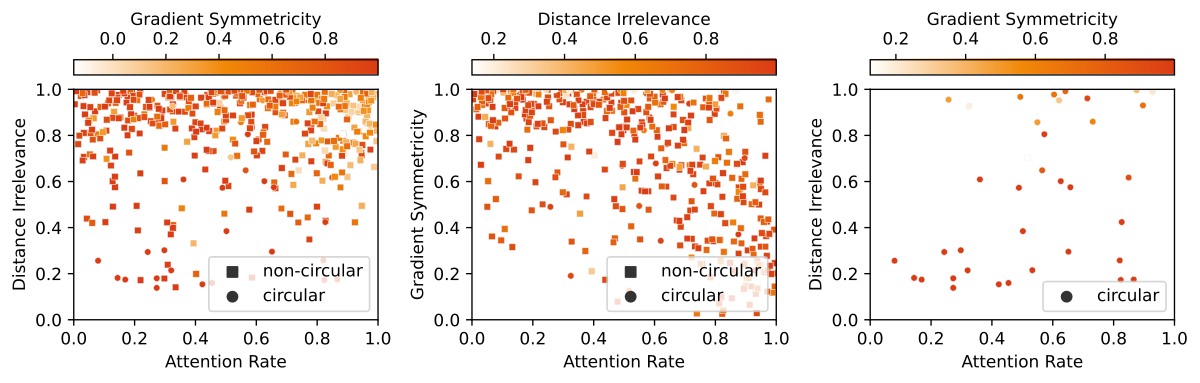}
    \vspace{-4mm}
    \caption{Training results from transformers with 2, 3 and 4 layers. Among all the trained transformers with 2, 3 and 4 layers, 9.95\%, 11.55\% and 6.08\% are circular, respectively.} %
    \label{fig:multilayers}
\end{figure}

\section{Pizzas Come in Pairs} \label{sec:pizzapairs}

Cautious readers might notice that the pizza algorithm is imperfect - for near antipodal points, the sum vector will have a very small norm and the result will be noise-sensitive. While the problem is partially elevated by the use of multiple circles instead of one, we also noticed another pattern emerged: \textbf{accompanying pizzas}.

The idea is the following: suppose the difference between adjacent points is $2k \bmod p$, then the antipodal points have difference $\pm k$. Therefore, if we arrange a new circle with a difference $k$ for adjacent points, we will get a pizza that works \textbf{best} for formerly antipodal points.

\subsubsection*{Algorithm: Accompanying Pizza}

\begin{itemize}
    \item[Step 1] Take $w_k$ as of the accompanied pizza. On given input $a$ and $b$, circularly embed them to two vectors on the circumference $(\cos(2 w_k a),\sin(2 w_k a))$ and $(\cos(2w_k b),\sin(2w_k b))$.
    \item[Step 2] Compute the midpoint: $$s=\frac{1}{2}(\cos(2 w_k a)+\cos(2 w_k b),\sin(2 w_k a)+\sin(2 w_k b))$$
    \item[Step 3] Output of this pizza is computed as a dot product. $$A_c=-(\cos(w_kc),\sin(w_kc))\cdot s$$
\end{itemize}

\begin{figure}[!h]
    \centering
    \def\svgwidth{\columnwidth}
    \vspace{-0.5mm}
    \includegraphics[width=0.85\textwidth]{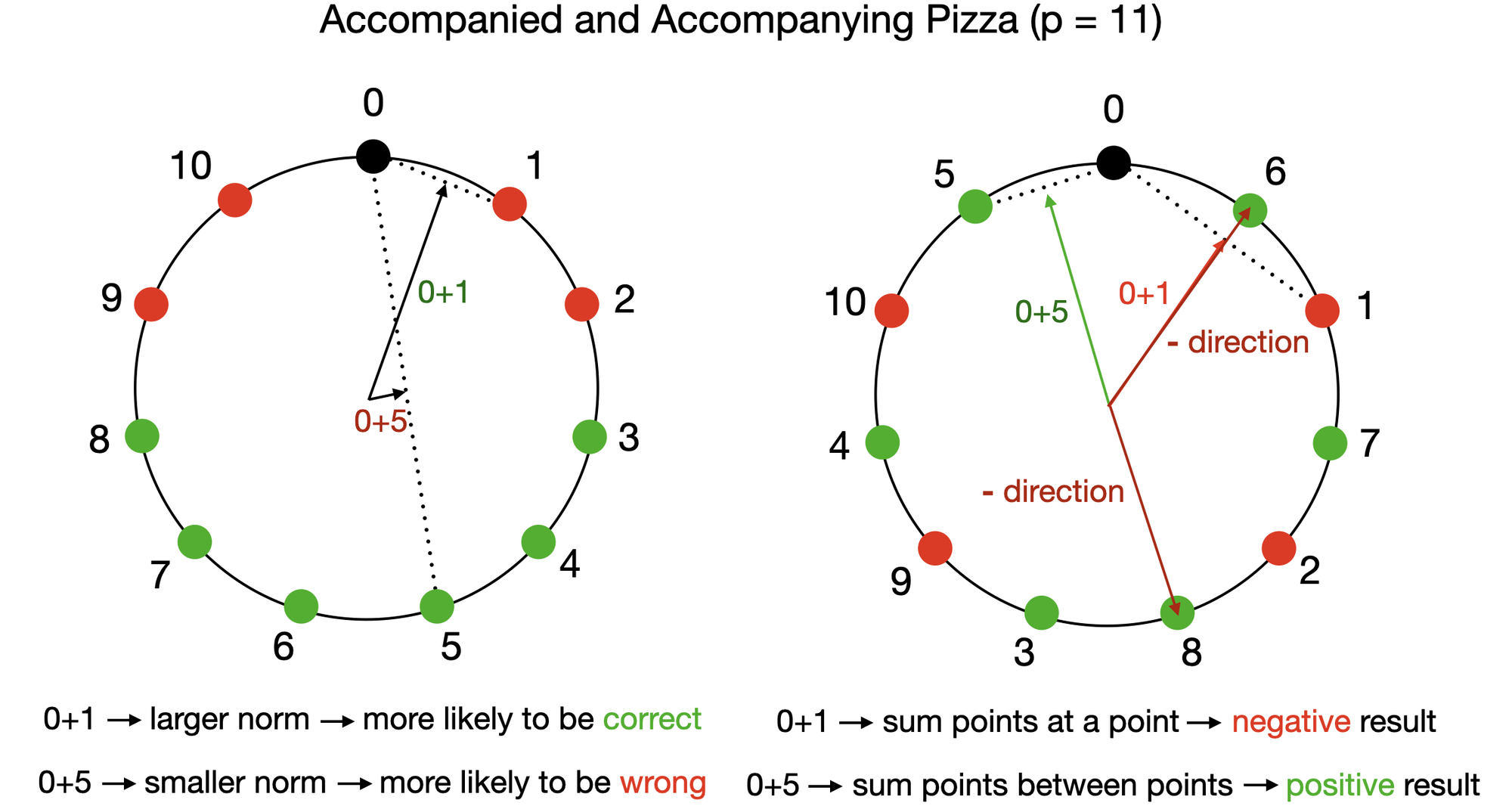}
    \caption{An Illustration on the Accompanying Pizza Algorithm}
    \label{fig:accompill}
\end{figure}

This is exactly what we observed in Model A (Table \ref{tab:isolationside}, Figure \ref{fig:corlogitpizzaside}). With the six circles (pizzas and accompanying pizzas) included in the embedding, Model A also gets 100\% accuracy.

\begin{table}[!h]
\centering
\begin{tabular}{l|l|l}
       Circle & $w_k$ & $A_c$ FVE \\ \hline
      \#4 (accompanying \#1)& $2\pi/59\cdot 17$ & 97.56\% \\ \hline
       \#5 (accompanying \#2) & $2\pi/59\cdot 3$ & 97.23\%\\ \hline
      \#6 (accompanying \#3) & $2\pi/59\cdot 44$ & 97.69\%\\
\end{tabular}
\vspace{3mm}
    \caption{After isolating accompanying circles in the input embedding, fraction of variance explained (FVE) of \textbf{all} Model A's output logits by various formulas. Both model output logits and formula results' are normalized to mean $0$ variance $1$ before taking FVE. Accompanying and accompanied pizza have the same $w_k$.}
    \label{tab:isolationside}
\end{table}
\vspace{-3mm}

\begin{figure}[!h]
    \centering
    \def\svgwidth{\columnwidth}
    \includegraphics[width=\textwidth]{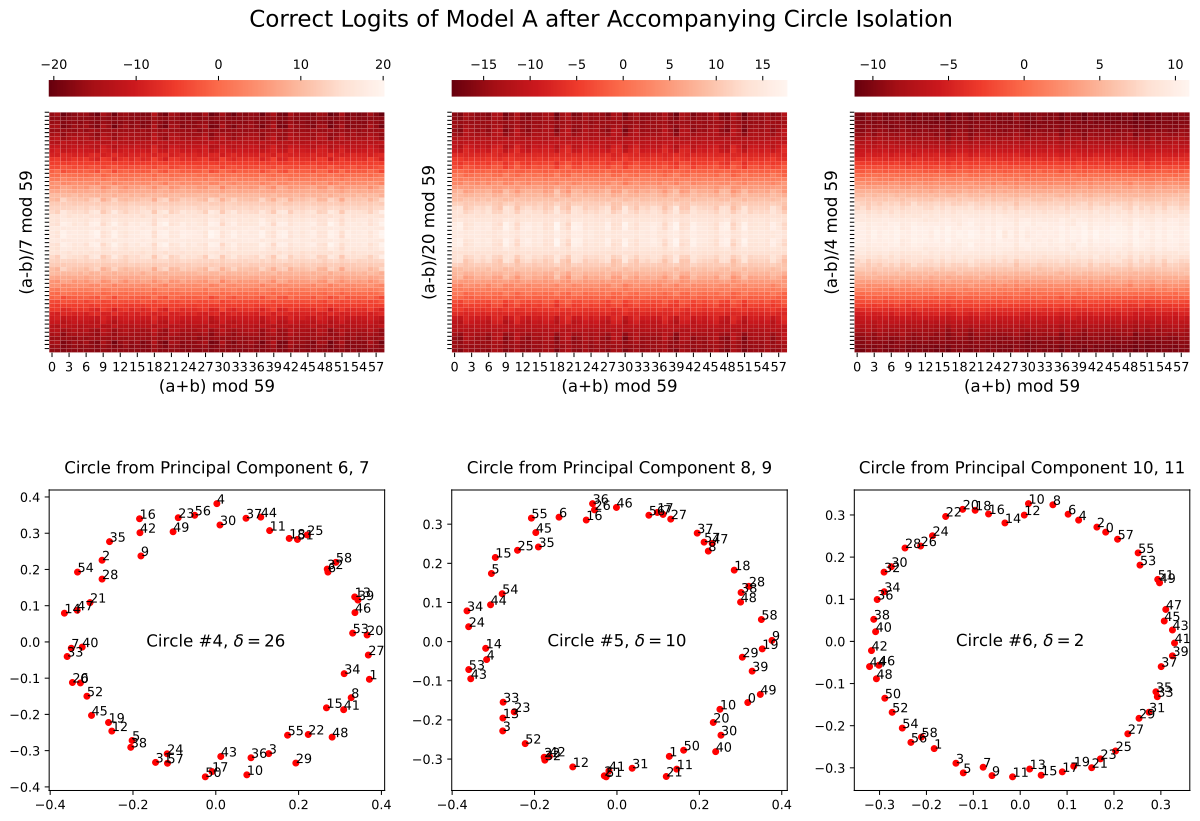}
    \vspace{-1mm}
    \caption{Correct logits of Model A (\textit{Pizza}) after circle isolation. Only accompanying pizzas are displayed. Notice the complementing logit pattern (Figure \ref{fig:corlogitpizza}).}
    \label{fig:corlogitpizzaside}
\end{figure}

\section{Results in Other Linear Architectures} \label{sec:lineararch}

While this is not the primary focus of our paper, we also ran experiments on the following four different linear model setups (see Section \ref{sec:linearmodeldetails} for setup details).

\begin{itemize}
    \item For all the models, we first encode input tokens $(a,b)$ with a trainable embedding layer $W_E$: $x_1=W_{E,a},~x_2=W_{E,b}$ (positional embedding removed for simplicity). $L_1, L_2,L_3$ are trainable linear layers. The outmost layers (commonly referred as unembed layers) have no biases and the other layers have biases included for generality.
    \item Model $\alpha$: calculate output logits as $L_2(\text{ReLU}(L_1(x_1+x_2)))$.
    \item Model $\beta$: calculate output logits as $L_3(\text{ReLU}(L_2(\text{ReLU}(L_1(x_1+x_2)))))$.
    \item Model $\gamma$: calculate output logits as $L_3(\text{ReLU}(L_2(\text{ReLU}(L_1(x_1)+L_1(x_2)))))$.
    \item Model $\delta$: calculate output logits as $L_2(\text{ReLU}(L_1([x_1;x_2])))$
    
    ($[x_1;x_2]$ stands for the concatenation of $x_1$ and $x_2$)
\end{itemize}

\begin{figure}[!h]
    \centering
    \def\svgwidth{\linewidth}
    \includegraphics[width=\textwidth]{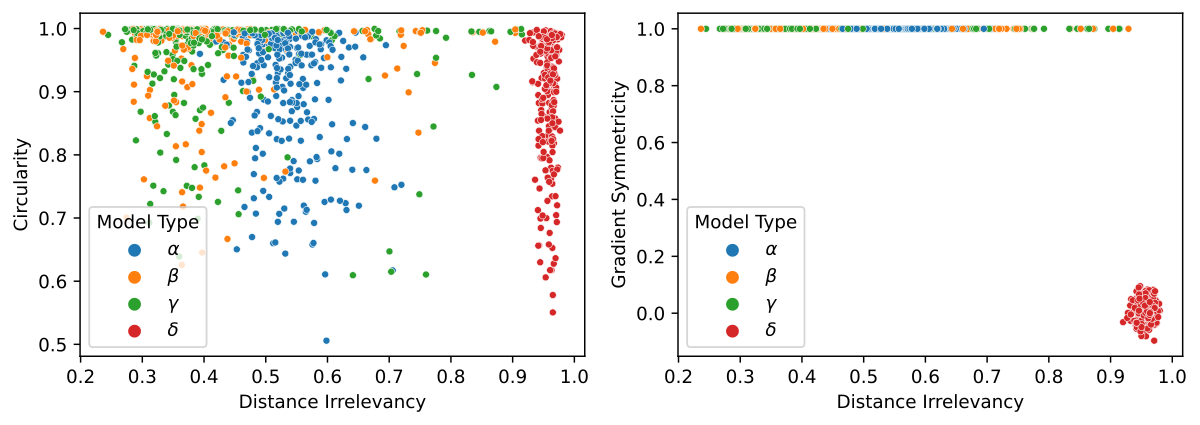}
    \def\svgwidth{\linewidth}
    \includegraphics[width=\textwidth]{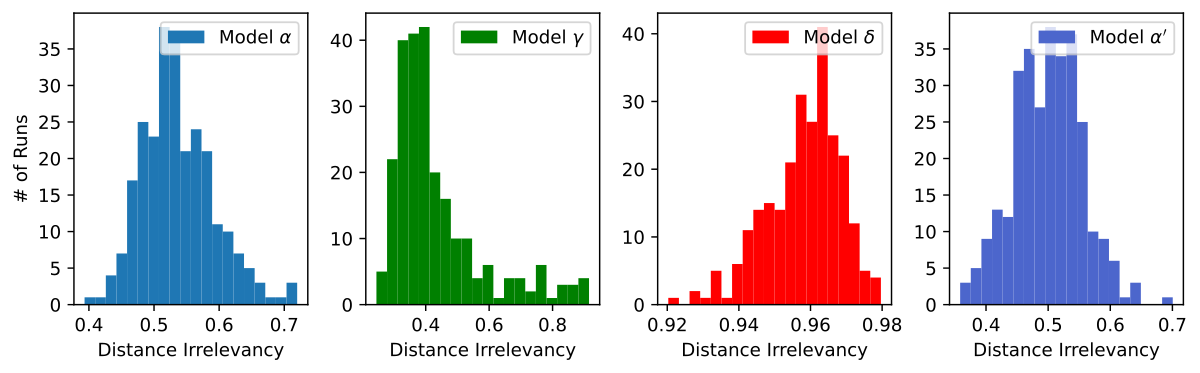}
    \vspace{-4mm}
    \caption{Training results from linear models. Each point in the first-row plots represents a training run. The second row are histograms for distance irrelevancy of each model type.}
    \label{fig:lineararchs}
\end{figure}

The results are shown in Figure \ref{fig:lineararchs}. Rather surprisingly, Model $\alpha$, Model $\beta$ and Model $\delta$ gave radically different results. Model $\beta$ and Model $\gamma$ are very similar, and in general they are more pizza-like than Model $\alpha$, with lower distance irrelevancy and higher circularity. This could be explained by the addition of an extra linear layer.

However, Model $\delta$ gave very different results from Model $\alpha$ although they are both one-layer linear models. It is more likely to be non-circular and have very high distance irrelevancy in general. In other words, concatenating instead of adding embeddings yields radically different behaviors in one-layer linear model. This result, again, alarmed us the significance of induction biases in neural networks.

We also want to note that using different embeddings on two tokens of Model $\alpha$ doesn't resolve the discrepancy. The following model

\begin{itemize}
    \item Model $\alpha'$: calculate output logits as $L_2(\text{ReLU}(L_1(x_1+x_2)))$ where $x_1=W^A_{E,a},~x_2=W^B_{E,b}$ on input $(a,b)$ and $W_E^A,W_E^B$ are different embedding layers.
\end{itemize}

gives roughly the same result as of Model $\alpha$ (Figure \ref{fig:lineararchs}, lower right corner).%

Figure \ref{fig:corlogitbeta} shows the correct logits after circle isolation (Section \ref{sec:ciriso}) of a circular model from Model $\beta$ implementing the pizza algorithm. Figure \ref{fig:corlogitdelta} shows the correct logits after circle isolation (Section \ref{sec:ciriso}) of a circular model from Model $\delta$. We can see the pattern is similar but different from the one of clock algorithm (Figure \ref{fig:corlogitclock}). We leave the study of such models to future work.

\begin{figure}[!h]
    \centering
    \caption{Correct logits from Model $\beta$ after circle isolation.}
    \def\svgwidth{\columnwidth}
    \includegraphics[width=\textwidth]{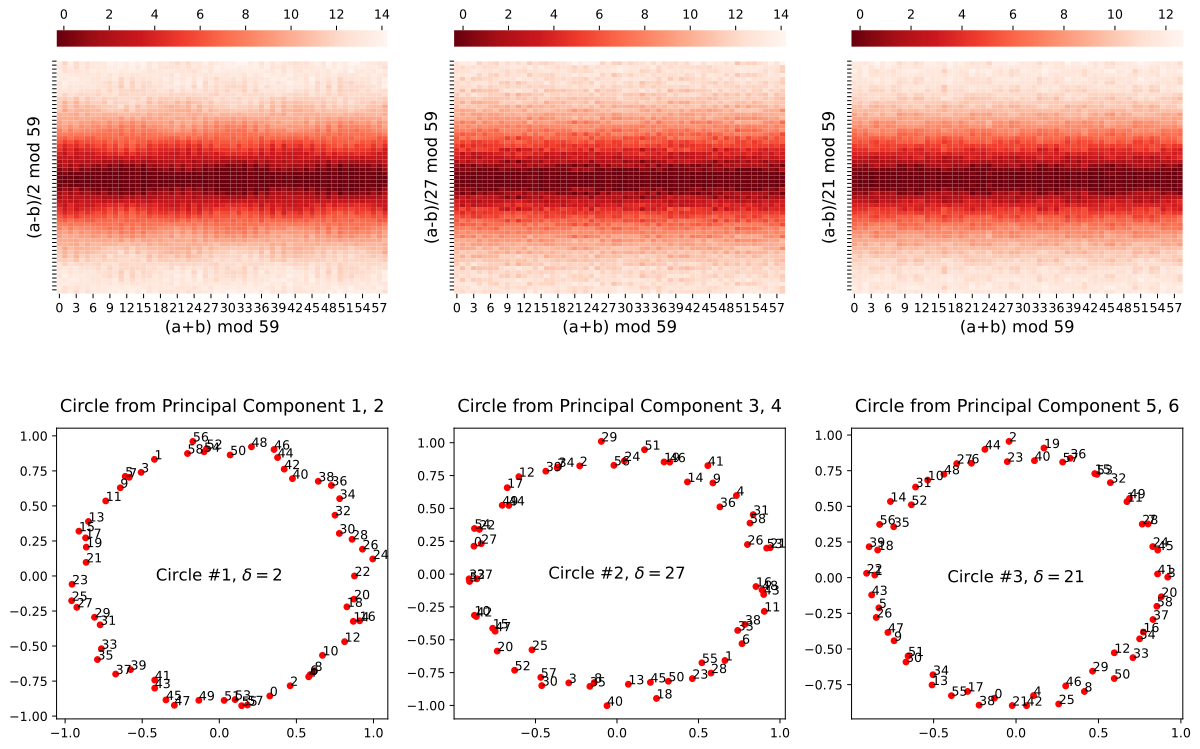}
    \vspace{-1mm}
    \label{fig:corlogitbeta}
\end{figure}

\begin{figure}[!h]
    \centering
    \caption{Correct logits from Model $\delta$ after circle isolation.}
    \def\svgwidth{\columnwidth}
    \includegraphics[width=\textwidth]{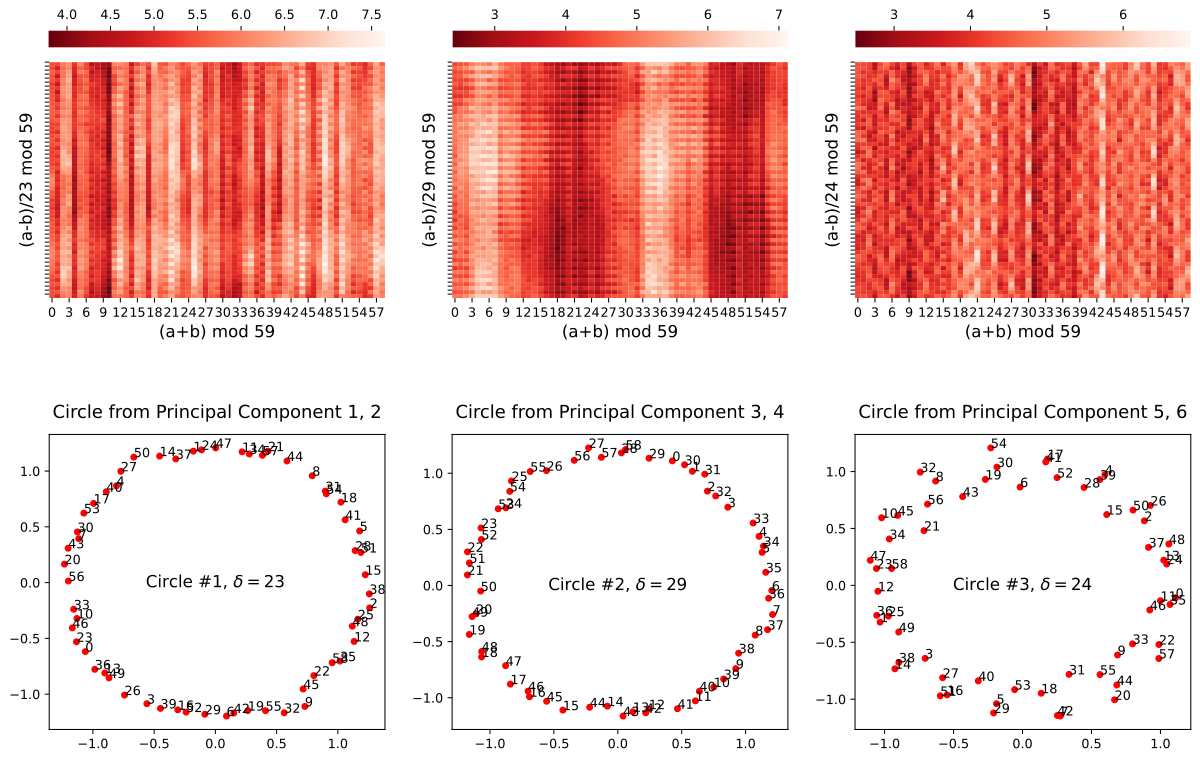}
    \vspace{-1mm}
    \label{fig:corlogitdelta}
\end{figure}

\section{Architecture and Training Details}

\subsection{Transformers}\label{sec:transformerdetails}

Here we describe our setup for the main experiments. See Appendix \ref{sec:lineararch} and Appendix \ref{sec:difffamily} for experiments on different setups.

\paragraph{Architecture} We train bidirectional transformers (attention unmasked) to perform modular addition $\bmod~p$ where $p=59$. To calculate $(a+b)\bmod p$, the input is provided to the model as a sequence of two tokens $[a,b]$. The output logit at the last token is considered as the output of the model. For a transformer with ``width'' $d$, the input embedding and the residue stream will be $d$-dimensional, $4$ attention heads of $\lfloor{d/4}\rfloor$ dimensions will be employed, and the MLP will be of $4d$ hidden units. By default $d=128$ is chosen. ReLU is used as the activation function and layer normalization isn't applied. The post-softmax attention matrix is interpolated between an all-one matrix and original as specified by the attention rate (Section \ref{sec:results}). We want to point out that the setup of constant-attention transformers is also considered in the previous work \cite{hassid-etal-2022-much}.

\paragraph{Data} Among all possible data points ($p^2=3481$ of them), we randomly select $80\%$ as training samples and $20\%$ as validation samples. This choice (small $p$ and high training data fraction) helps accelerating the training.

\paragraph{Training} We used AdamW optimizer \cite{loshchilov2017decoupled} with learning rate $\gamma=0.001$ and weight decay factor $\beta=2$. We do not use minibatches and the shuffled training data is provided as a whole batch in every epoch. For each run, we start the training from scratch and train for $20,000$ epoches. We removed the runs that did not reach $100\%$ validation accuracy at the end of the training (majority of the runs reached $100\%$).

\subsection{Linear Models}\label{sec:linearmodeldetails}

Here we describe our setup for the linear model experiments (Appendix \ref{sec:lineararch}).

\paragraph{Architecture} We train several types of linear models to perform modular addition $\bmod~p$ where $p=59$. The input embedding, residue stream and hidden layer are all $d=256$ dimensional. ReLU is used as the activation function. The actual structures of network types are specified in Appendix \ref{sec:lineararch}.

\paragraph{Data \& Training} Same as in the previous section (Section \ref{sec:transformerdetails}).

\subsection{Computing Resources}

A total of 226 GPU days of NVidia V100 is spent on this project, although we expect a replication would take significantly fewer resources.

\section{Mathematical Description of Constant-Attention transformer} \label{sec:mathtransformer}

In this section, we examine the structure of constant-attention transformers loosely following the notation of \cite{elhage2021mathematical}.

Denote the weight of embedding layer as $W_E$, the weight of positional embedding as $W_\text{pos}$, the weight of the value and output matrix of the $j$-th head of the $t$-th layer as $W_V^{t,j}$ and $W_O^{t,j}$, the weights and biases of the input linear map of MLP in the $t$-th layer as $W_\text{in}^t$ and $b_\text{in}^t$, the corresponding weights and biases of the output linear map as $W_\text{out}^t$ and $b_\text{out}^t$, and the weight of the unembedding layer as $W_U$. Notice that the query and the key matrices are irrelevant as the attention matrix is replaced with an all-one matrix. Denote $x^j$ as the value of residue stream vector after the first $j$ layers and denote $c_i$ as the character in the $i$-th position. We use subscripts like $x_t$ to denote taking a specific element of vector.

We can formalize the logit calculation as the following:

\begin{itemize}
    \item Embedding: $x^0_i=W_{E,c_i}+W_{\text{pos},i}$.
    \item For each layer $t$ from $1$ to $n_\text{layer}$:
    \begin{itemize}
        \item \textbf{Constant} Attention: $w^{t}_i=x^{t-1}_i+\sum_j W_O^{t,j}W_V^{t,j} \sum_k x^{t-1}_k$.
        \item MLP: $x^{t}=w^{t}+b_\text{out}^{t}+W_\text{out}^{t}\text{ReLU}(b_\text{in}^{t}+W_\text{in}^{t}w^{t})$.
    \end{itemize}
    \item Output: $O=W_Ux^{n_\text{layer}}$.
\end{itemize}

In the particular case where the input length is $2$, the number of layer is $1$, and we focus on the logit of the last position, we may restate as the following (denote $z$ as $x^1$ and $y$ as $w^1$):

\begin{itemize}
    \item Embedding: $x_1=W_{E,c_1}+W_{\text{pos},1},~x_2=W_{E,c_2}+W_{\text{pos},2}$.
    \item Constant Attention: $y=x_2+\sum_j W_O^{j}W_V^{j} (x_1+x_2)$.
    \item MLP: $z=y+b_\text{out}^{t}+W_\text{out}^{t}\text{ReLU}(b_\text{in}^{t}+W_\text{in}^{t}y)$.
    \item Output: $o=W_Uz$.
\end{itemize}

If we remove the skip connections, the network after embedding could be seen as $$o=L_U\left(L_\text{out}\left(\text{ReLU}\left(L_\text{in}\left(\sum_j L_O^j \left(L_V^j\left(x_1+x_2\right)\right)\right)\right)\right)\right)$$ where $L_V^j, L_O^j, L_\text{in}, L_\text{out}, L_U$ are a series of linear layers corresponding to the matrices.

\section{Pizza with Attention} \label{sec:widepizza}

Extrapolating from Figure \ref{fig:variouswidth}, we trained transformers with width $1024$ and attention rate $1$ (normal attention). After several tries, we are able to observe a trained circular model with distance irrelevance $0.156$ and gradient symmetricity $0.995$, which fits our definition of \textit{Pizza} (Figure \ref{fig:widepizza}).

\begin{figure}[!ht]
    \centering
    \includegraphics[width=\textwidth]{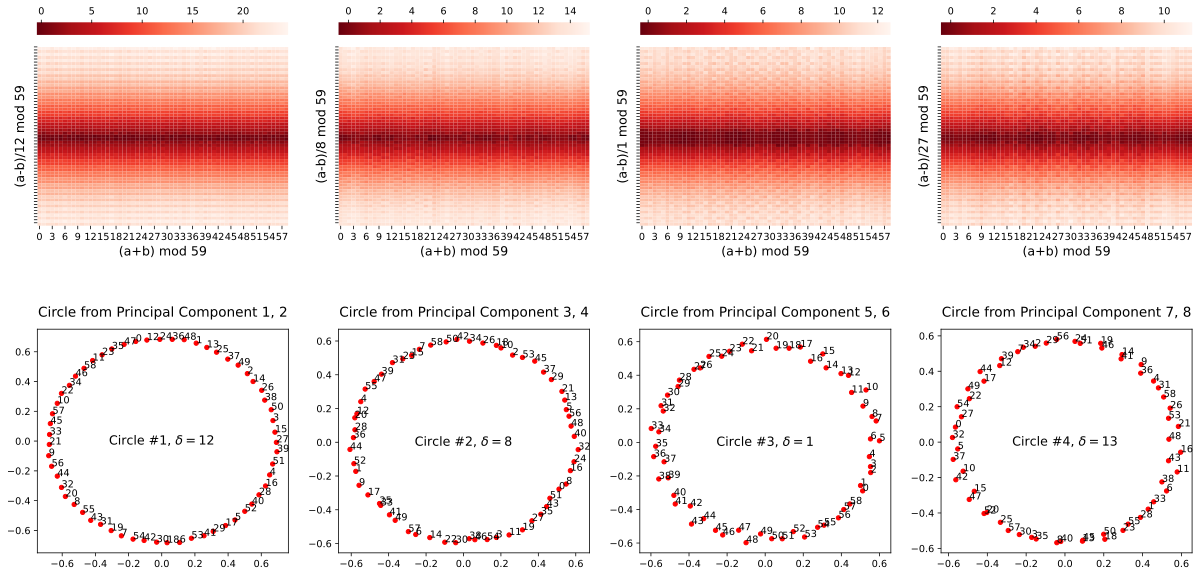}
    \caption{Correct logits of the trained model in Section \ref{sec:widepizza} after circle isolation (Section \ref{sec:ciriso}).}
    \label{fig:widepizza}
\end{figure}

\section{Results on Slightly Different Setups}\label{sec:difffamily}

We considered the following variations of our setups (Appendix \ref{sec:transformerdetails}, Section \ref{sec:phase-transition}), for which the existence of pizzas and clocks as well as the phase changes are still observed.

\paragraph{GeLU instead of ReLU} We conducted the same 1-layer transformer experiment with activation function GeLU instead of ReLU. Very similar results are observed (Figure \ref{fig:gelu}).

\paragraph{Encode Two Tokens Differently} We conducted the 1-layer transformer experiments but with different embedding for the two tokens. Again very similar results are observed (Figure \ref{fig:diff-vocab}). We also discovered that the two tokens' embeddings are often aligned to implement the \textit{Pizza} and \textit{Clock} algorithm (Figure \ref{fig:iso-diffemb}).

\paragraph{Adding Equal Sign} We conducted the 1-layer transformer experiment with an equal sign added. Very similar results are observed (Figure \ref{fig:eqn-sign}).

\begin{figure}[!h]
    \centering
    \def\svgwidth{\columnwidth}
    \caption{Training results from 1-layer transformers with GeLU instead of ReLU as the activation function. Each point in the plots represents a training run that reached 100\% validation accuracy.}
    \includegraphics[width=\textwidth]{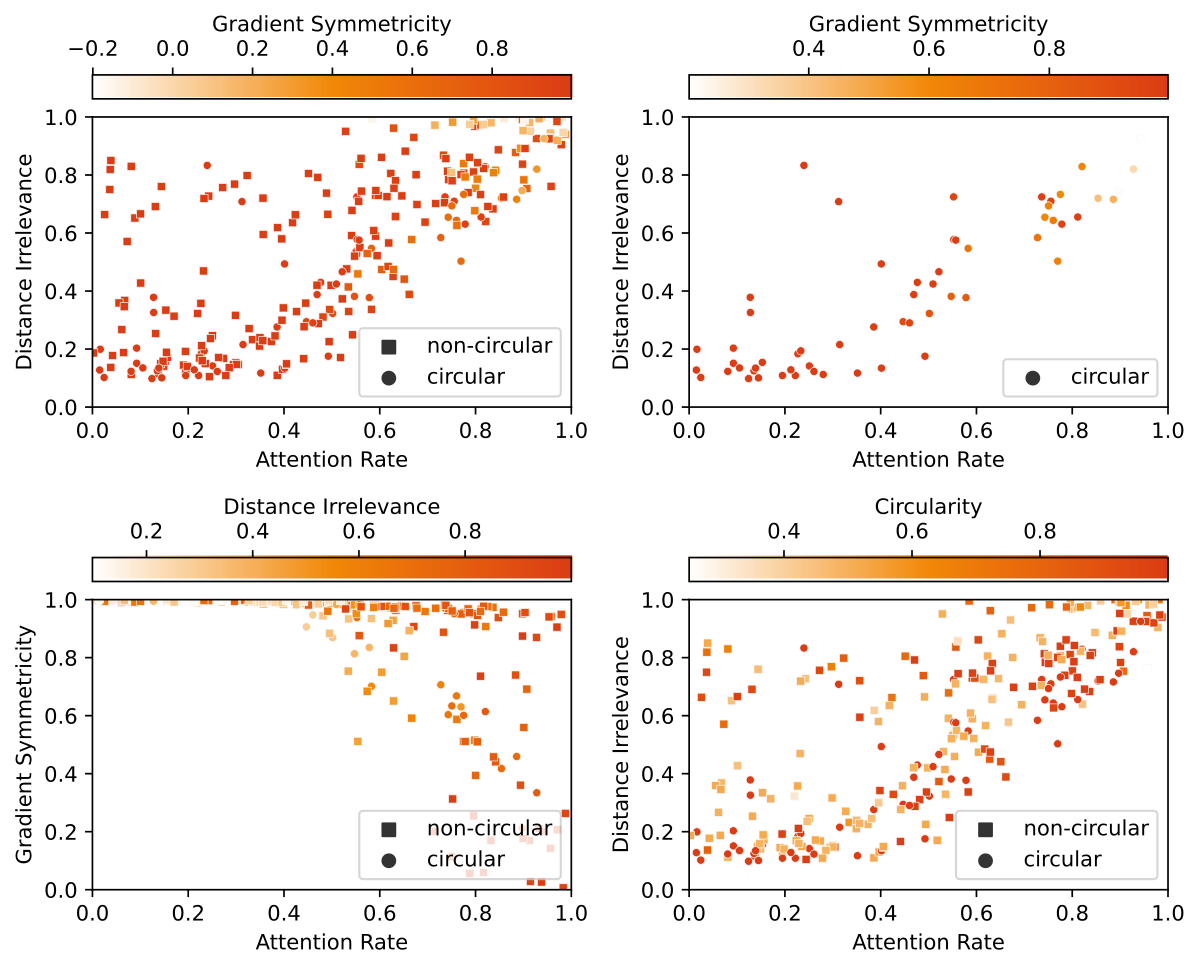}
    \vspace{-1mm}
    \label{fig:gelu}
\end{figure}

\begin{figure}[!h]
    \centering
    \def\svgwidth{\columnwidth}
    \caption{Training results from 1-layer transformers where the two tokens use different embeddings (feed $[a,b+p]$ to the model on input $(a,b)$; $2p$ tokens are handled in the embedding layer). Each point in the plots represents a training run that reached 100\% validation accuracy. We did not use circularity to filter the result because it is no longer well-defined.}
    \includegraphics[width=\textwidth]{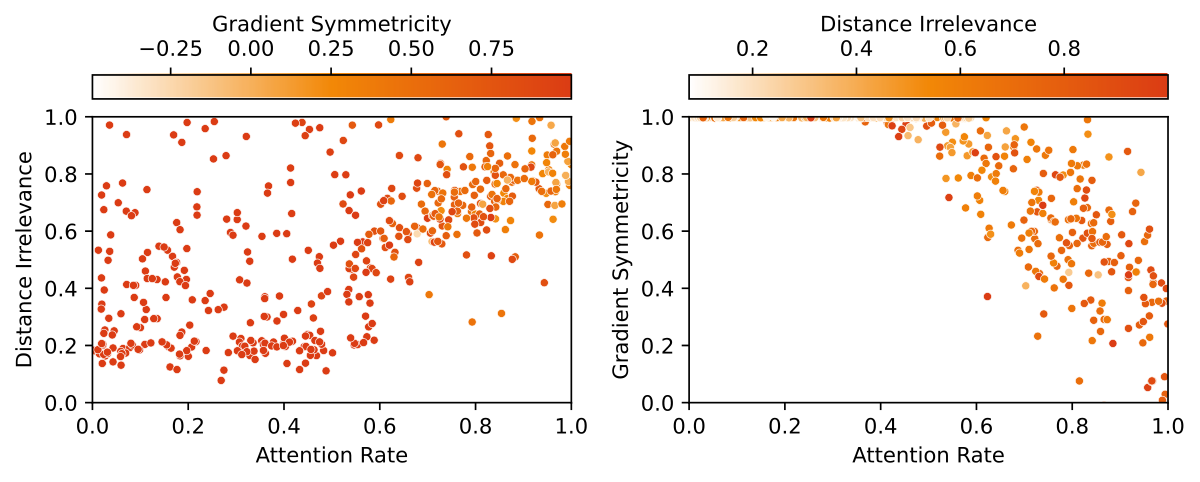}
    \vspace{-1mm}
    \label{fig:diff-vocab}
\end{figure}

\begin{figure}[!h]
    \centering
    \def\svgwidth{\columnwidth}
    \caption{Correct logits after circle isolation from a trained model where two tokens use different embeddings. The blue points represent the embeddings for the first token and the green points represent the embeddings for the second token. The model is implementing the \textit{Pizza} algorithm. The correct logit pattern is shifted comparing to the previous patterns because the embeddings of two tokens do not line up exactly. For example, the third circle has near-maximum correct logit for $a=6, b=3$ (the two points lining up on the top) and $(a-b)/18\equiv 10 \pmod {59}$. This is the reason that the correct logit pattern appears to be shifted 10 units down.}
    \includegraphics[width=\textwidth]{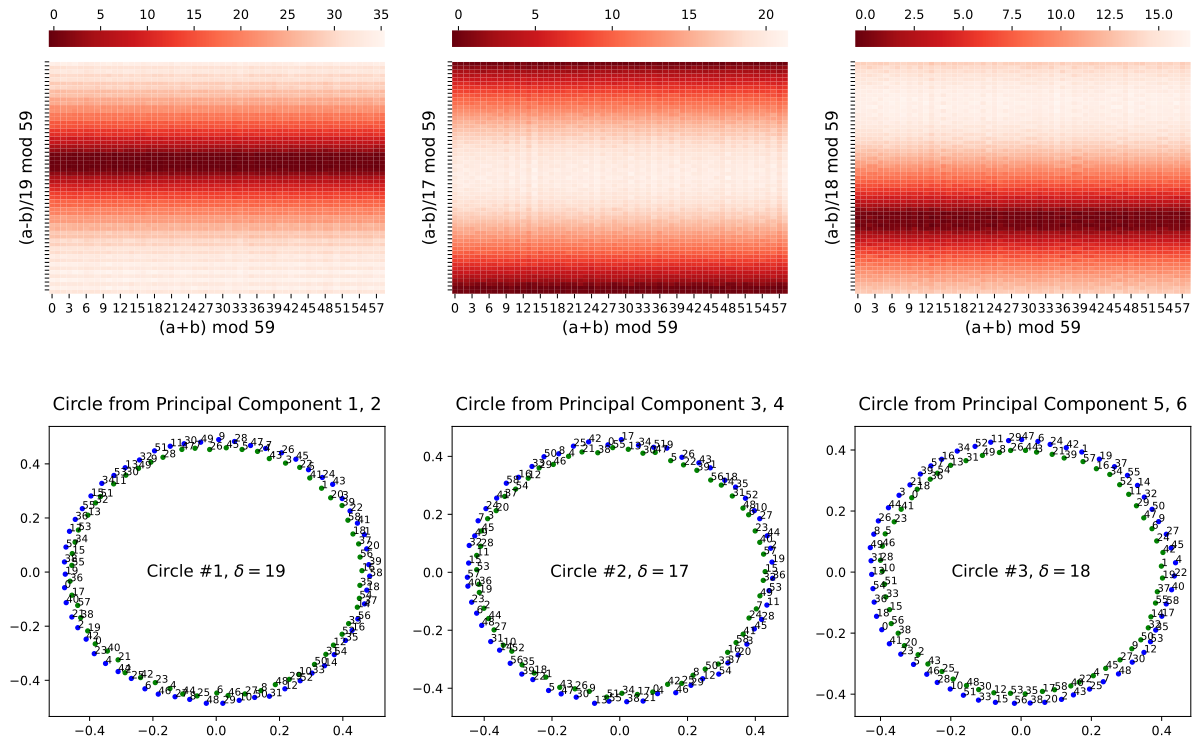}
    \vspace{-1mm}
    \label{fig:iso-diffemb}
\end{figure}

\begin{figure}[!h]
    \centering
    \def\svgwidth{\columnwidth}
    \caption{Training results from 1-layer transformers where an equal sign is added (feed $[a,b,\text{=}]$ to the model on input $(a,b)$ where $\text{=}$ is a special token; $p+1$ tokens are handled in the embedding layer; context length of the model becomes $3$). Each point in the plots represents a training run that reached 100\% validation accuracy. We did not use circularity to filter the result because it is no longer well-defined.}
    \includegraphics[width=\textwidth]{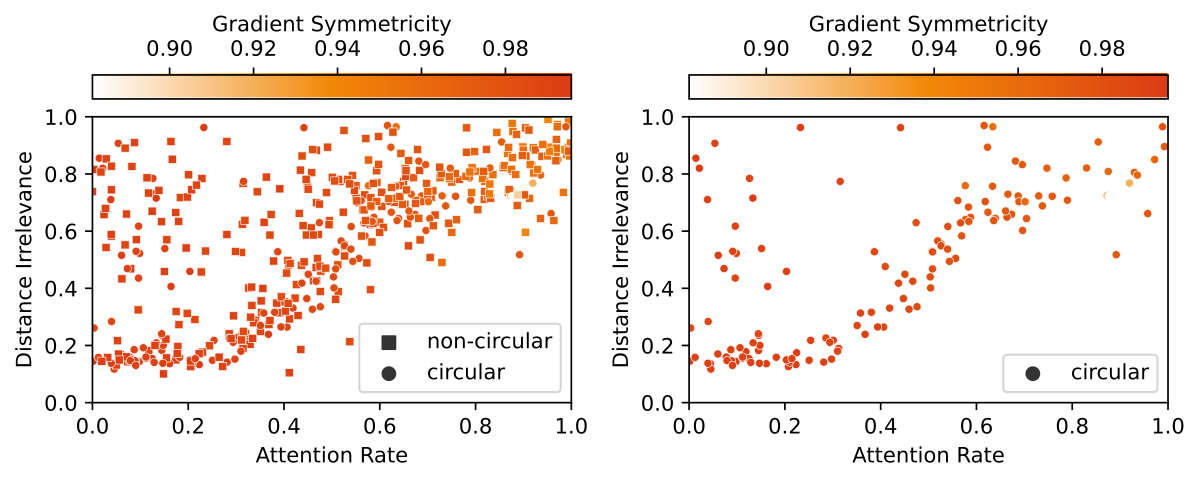}
    \vspace{-1mm}
    \label{fig:eqn-sign}
\end{figure}

\section{Pizza Occurs Early in the Clock Training}

We plotted intermediate states during the training of a model with attention (attention rate 1). Pizza-like pattern was observed early in the training, but the pattern gradually disappeared during the run (Figure \ref{fig:clock-training}).

\begin{figure}[!h]
    \centering
    \def\svgwidth{\columnwidth}
    \includegraphics[width=\textwidth]{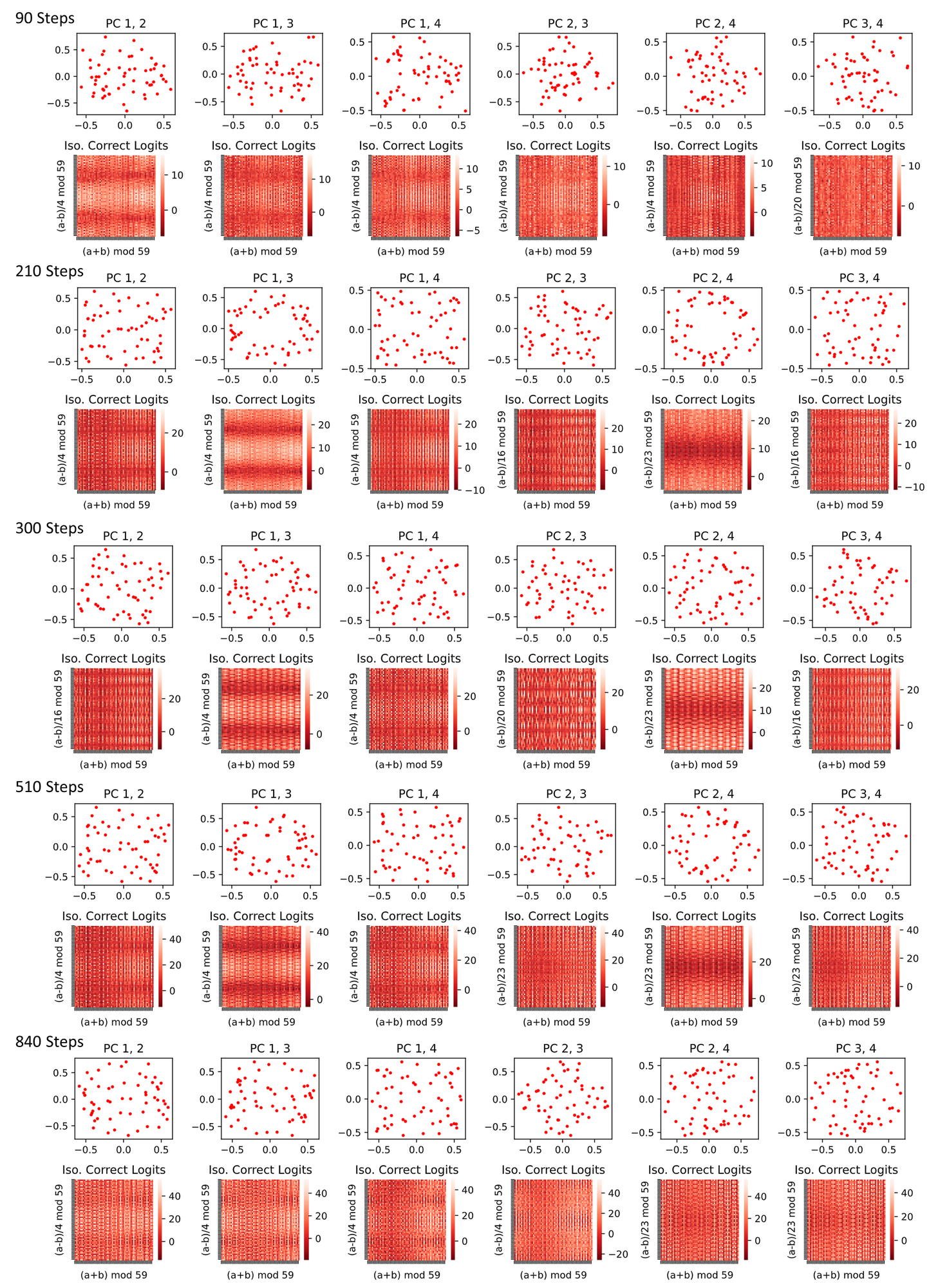}
    \caption{For a 1-layer transformer with attention, correct logits after principal component (possibly non-circle) isolations at various states during the training. The pizza-like pattern gradually \textit{desolved}.}
    \label{fig:clock-training}
\end{figure}

\section{Accompanying Pizza Occurs Early in the Pizza Training}

We plotted intermediate states during the training of a model without attention (attention rate 0). We observed the early emergence of a pattern similar to accompanying pizza in training runs (Figure \ref{fig:accompanying-pizza}) and removing that circle brings accuracy down from 99.7\% to 97.9\%. They are less helpful later in the network (removing accompanying pizzas in trained Model A only brings accuracy down to 99.7\%).

\begin{figure}[h]
    \centering
\includegraphics[width=0.8\linewidth]{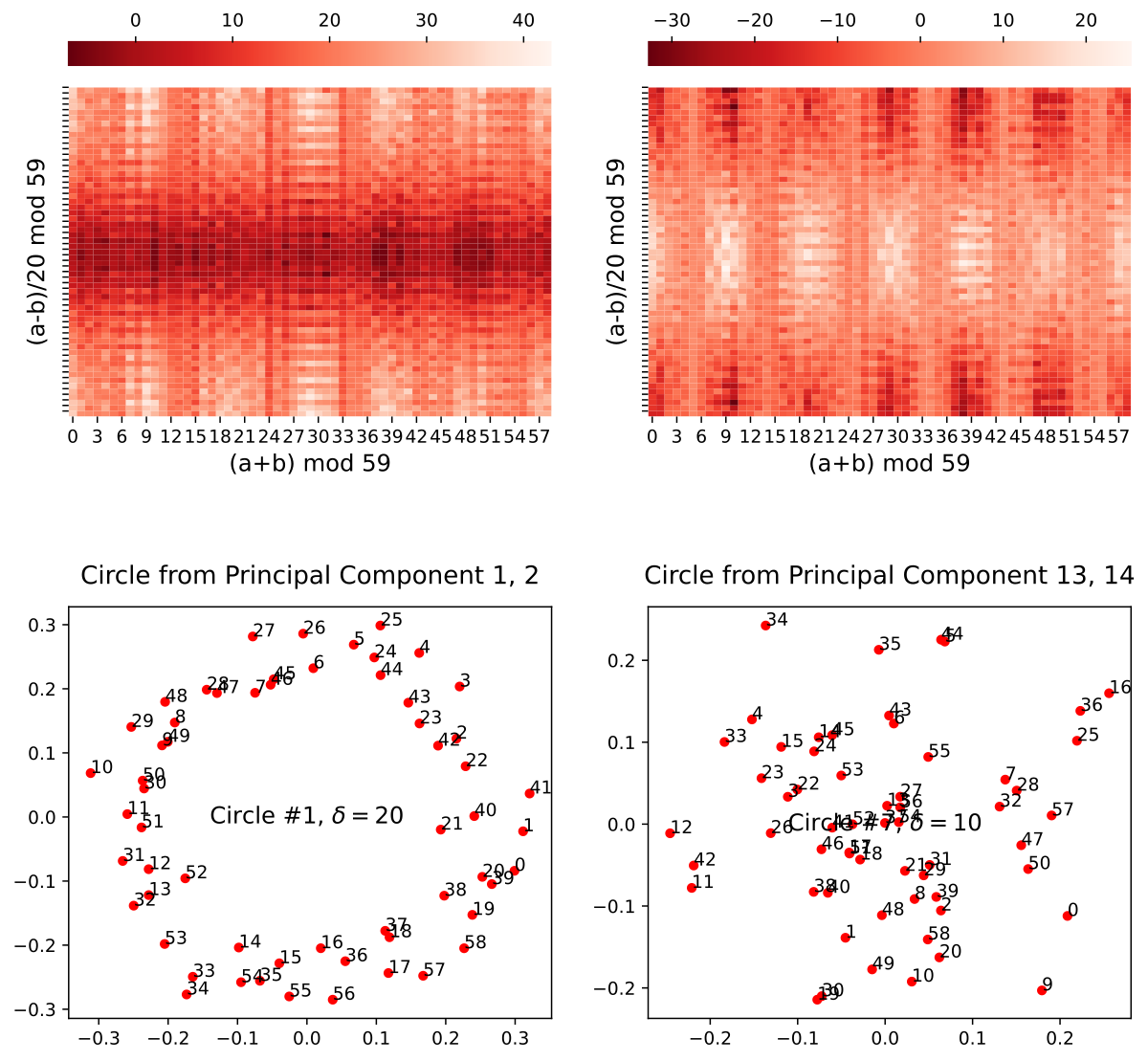}
\caption{Immediate state after 600 epochs of training for a 1-layer transformer with constant attention.}
    \label{fig:accompanying-pizza}
\end{figure}

\section{A Closer Look at a Linear Pizza Model} \label{sec:pizzadetail}

In this section, we provide a full picture of the linear model shown in Figure \ref{fig:corlogitbeta} by investigating the actual weights in the model.

\subsection{Model Structure}

As described in Appendix \ref{sec:lineararch}, on input $(a,b)$, the output logits of the model is computed as $$L_3(\text{ReLU}(L_2(\text{ReLU}(L_1(\text{Embed}[a]+\text{Embed}[b]))))).$$

Denote the weight of embedding layer as $W_E$, the weight of the unembedding layer ($L_3$) as $W_U$, and the weights and biases of $L_1$ and $L_2$ as $W_1,b_1$ and $W_2, b_2$, respectively, then the output logits on input $(a,b)$ can be written as $$W_U\text{ReLU}(b_2+W_2\text{ReLU}(b_1+W_1(W_E[a]+W_E[b]))).$$

\subsection{General Picture}

We first perform principal component visualizations on the embedding and unembedding matrices. From Figure \ref{fig:pcaeu}, we can see that the embedding and unembedding matrices formed \textit{matching circles} (circles with the same gap $\delta$ between adjacent entries).

We now give the general overview of the circuit. Each pair of matching circles forms an instance of \textit{Pizza} and they operate independently (with rather limited interference). Specifically for each pair, 

\begin{itemize}

\item The embedding matrix first places the inputs $a,b$ on the circumference: $W_E'[a]\approx (\cos(w_k a),\sin(w_k a))$ and $W_E'[b]\approx (\cos(w_k b),\sin(w_k b))$ ($w_k=2\pi k/p$ for some integer $k\in [1,p-1]$ as in Section \ref{sec:clock}; $W_E'$ stands for the two currently considered principal components of $W_E$; rotation and scaling omitted for brevity).

\item The embeddings are added to get \begin{align*}&(\cos(w_k a)+\cos(w_k b),\sin(w_k a)+\sin(w_k b))\\=&\cos(w_k(a-b)/2)\cdot(\cos(w_k(a+b)/2),\sin(w_k(a+b)/2))\end{align*}

\item It is then passed through the first linear layer $L_1$. Each result entry pre-ReLU will thus be a linear combination of the two dimensions of the aforementioned vectors, i.e. $\cos(w_k(a-b)/2)\cdot(\alpha\cos(w_k(a+b)/2)+\beta\sin(w_k(a+b)/2))$ for some $\alpha,\beta$, which will become $\lvert\cos(w_k(a-b)/2)\rvert\lvert\alpha\cos(w_k(a+b)/2)+\beta\sin(w_k(a+b)/2))\rvert$ after ReLU.

\item These values are then passed through the second linear layer $L_2$. Empirically the ReLU is not observed to be effective as the majority of values is positive. The output entries are then simply linear combinations of aforementioned outputs of $L_1$.

\item The unembedding matrix is finally applied. In the principal components we are considering, $W_U'[c]\approx (\cos(w_k c),\sin(w_k c))$. ($W_U'$ stands for the two currently considered principal components of $W_U$; rotation and scaling omitted for brevity) and these two principal components correspond to a linear combination of the output entries of $L_2$, which then correspond to a linear combination of the outputs of $L_1$ (thanks to the non-functional ReLU).

\item Similar to the formula $|\sin(t)|-|\cos(t)|\approx \cos(2t)$ discussed in Appendix \ref{sec:mathpizza}, these linear combinations provide good approximations for $\lvert\cos(w_k(a-b)/2)\rvert \cos(w_k(a+b))$ and $\lvert\cos(w_k(a-b)/2)\rvert \sin(w_k(a+b))$. Finally we arrive at \begin{align*}&\lvert\cos(w_k(a-b)/2)\rvert (\cos(w_k c)\cos(w_k(a+b))+\sin(w_k c)\sin(w_k(a+b)))\\=&\lvert\cos(w_k(a-b)/2)|\cos(w_k(a+b-c))\end{align*}.

\end{itemize}

\begin{figure}[!h]
    \centering
    \def\svgwidth{\linewidth}
    \includegraphics[width=\textwidth]{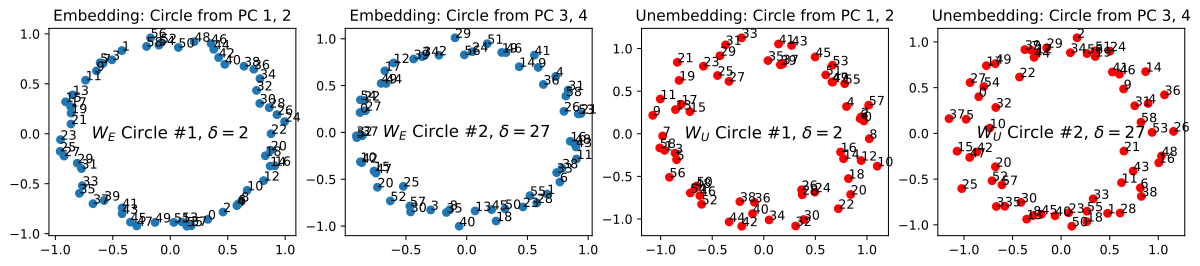}
    \vspace{-4mm}
    \caption{Visualization of the principal components of the embeddings and unembedding matrices.}
    \label{fig:pcaeu}
\end{figure}

\subsection{Aligning Weight Matrices}

We first verify that the ReLU from the second layer is not functional. After removing it, the accuracy of the model remains $100\%$ and the cross-entropy loss actually \textit{decreased} from $6.20\times 10^{-7}$ to $5.89 \times 10^{-7}$.

Therefore, the model output can be approximately written as $$W_U(b_2+W_2\text{ReLU}(b_1+W_1(W_E[a]+W_E[b])))=W_Ub_2+W_UW_2\text{ReLU}(b_1+W_1(W_E[a]+W_E[b])).$$

We now ``align'' the weight matrices $W_1$ and $W_2$ by mapping through the directions of the principal components of the embeddings and unembeddings. That is, we calculate how these matrices act on and onto the principal directions (consider $W_1v$ for every principal direction $v$ in $W_E$ and $v^TW_2$ for every principal direction $v$ in $W_U$). We call the other dimension of aligned $W_1$ and $W_2$ output and source dimensions, respectively (Figure \ref{fig:aligned}).

\begin{figure}[!h]
    \centering
    \def\svgwidth{\linewidth}
    \includegraphics[width=\textwidth]{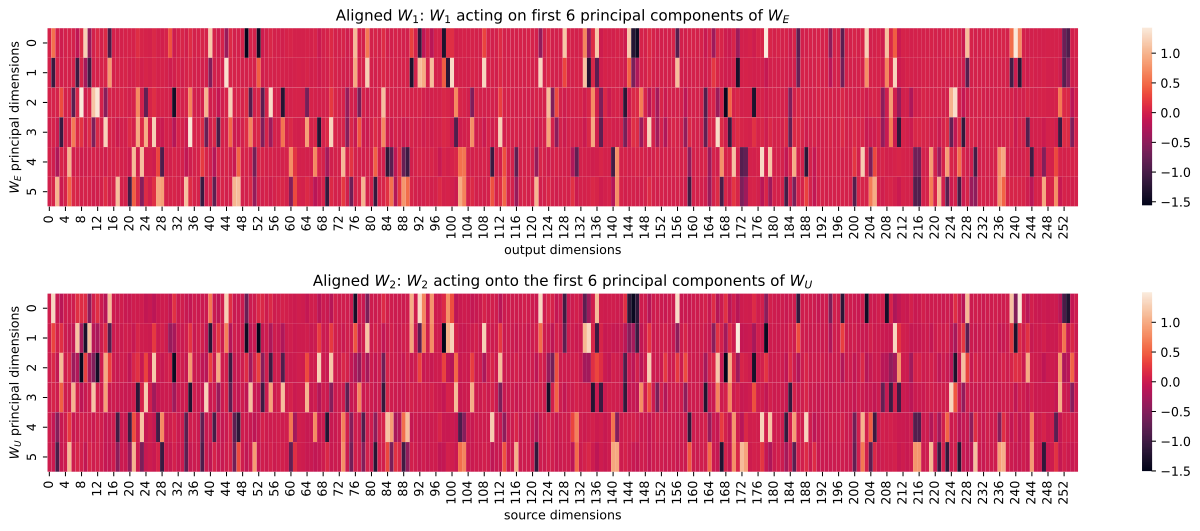}
    \vspace{-4mm}
    \caption{Visualization of the aligned $W_1$ and $W_2$.}
    \label{fig:aligned}
\end{figure}

In the aligned weight matrices, we can see a clear domino-like pattern: in most output or source dimensions, only two principal components have significant non-zero values, and they correspond to a pair of matching circle, or a pizza. In this way, every immediate dimension serves for exactly one pizza, so the pizzas do not interfere with each other.

\subsection{Approximation}

Everything becomes much clearer after realigning the matrices. For a pizza and its two corresponding principal embedding / unembedding dimensions, $W_E'[a]+W_E'[b]\approx \cos(w_k(a-b)/2)\cdot(\cos(w_k(a+b)/2),\sin(w_k(a+b)/2))$ will be mapped by realigned $W_1$ into its corresponding columns (which are different for every pizza), added with $b_1$ and apply \text{ReLU}. The result will then be mapped by the realigned $W_2$, added with \textit{realigned} $b_2$, and finally multipled by $(\cos(w_k c),\sin(w_k c))$.

For the first two principal dimensions, realigned $W_1$ has $44$ corresponding columns (with coefficients of absolute value $>0.1$). Let the embedded input be $(x,y)= W_E'[a]+W_E'[b]\approx \cos(w_k(a-b)/2)\cdot(\cos(w_k(a+b)/2),\sin(w_k(a+b)/2))$, the intermediate columns are

$\text{ReLU}([0.530x-1.135y+0.253,-0.164x-1.100y+0.205,1.210x-0.370y+0.198,-0.478x-1.072y+0.215,-1.017x+0.799y+0.249,0.342x-0.048y+0.085,1.149x-0.598y+0.212,-0.443x+1.336y+0.159,-1.580x-0.000y+0.131,-1.463x+0.410y+0.178,1.038x+0.905y+0.190,0.568x+1.188y+0.128,0.235x-1.337y+0.164,-1.180x+1.052y+0.139,-0.173x+0.918y+0.148,-0.200x+1.060y+0.173,-1.342x+0.390y+0.256,0.105x-1.246y+0.209,0.115x+1.293y+0.197,0.252x+1.247y+0.140,-0.493x+1.252y+0.213,1.120x+0.262y+0.239,0.668x+1.096y+0.205,-0.487x-1.302y+0.145,1.134x-0.862y+0.273,1.143x+0.435y+0.171,-1.285x-0.644y+0.142,-1.454x-0.285y+0.218,-0.924x+1.068y+0.145,-0.401x+0.167y+0.106,-0.411x-1.389y+0.249,1.422x-0.117y+0.227,-0.859x-0.778y+0.121,-0.528x-0.216y+0.097,-0.884x-0.724y+0.171,1.193x+0.724y+0.131,1.086x+0.667y+0.218,0.402x+1.240y+0.213,1.069x-0.903y+0.120,0.506x-1.042y+0.153,1.404x-0.064y+0.152,0.696x-1.249y+0.199,-0.752x-0.880y+0.106,-0.956x-0.581y+0.223]).$

For the first principal unembedding dimension, it will be taken dot product with

$[1.326,0.179,0.142,-0.458,1.101,-0.083,0.621,1.255,-0.709,0.123,-1.346,-0.571,1.016,\\
1.337,0.732,0.839,0.129,0.804,0.377,0.078,1.322,-1.021,-0.799,-0.339,1.117,-1.162,\\
-1.423,-1.157,1.363,0.156,-0.165,-0.451,-1.101,-0.572,-1.180,-1.386,-1.346,-0.226,\\
1.091,1.159,-0.524,1.441,-0.949,-1.248].$

Call this function $f(x,y)$. When we plug in $x=\cos(t),y=\sin(t)$, we get a function that well-approximated $8\cos(2t+2)$ (Figure \ref{fig:approx2}). Therefore, let $t=w_k(a+b)/2$, the dot product will be approximately $8\lvert\cos(w_k(a-b)/2)\rvert \cos(w_k(a+b)+2)$, or $\lvert\cos(w_k(a-b)/2)\rvert \cos(w_k(a+b))$ if we ignore the phase and scaling. This completes the picture we described above.

\begin{figure}[!h]
    \centering
    \def\svgwidth{0.5\linewidth}
    \includegraphics[width=0.4\textwidth]{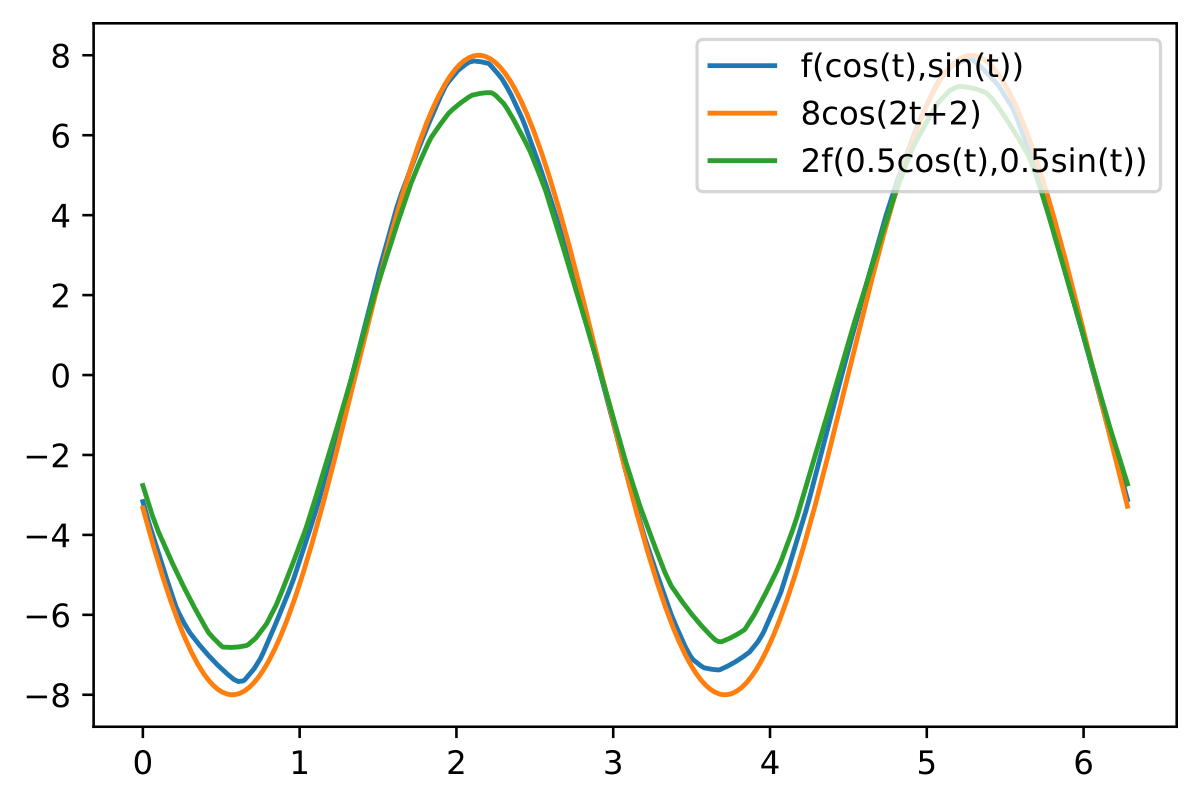}
    \caption{$f(\cos(t),\sin(t))$ well-approximates $8\cos(2t+2)$.}
    \label{fig:approx2}
\end{figure}

\end{appendix}

\end{document}